\documentclass[twoside]{article}
\usepackage[accepted]{aistats2022}

\usepackage[round]{natbib}

\usepackage{algorithm}
\usepackage{algorithmicx}
\usepackage[noend]{algpseudocode}
\usepackage{amsmath}
\usepackage{amssymb}
\usepackage{amsthm}
\usepackage{bbm}
\usepackage{bm}
\usepackage{caption}
\usepackage{color}
\usepackage{dirtytalk}
\usepackage{dsfont}
\usepackage{enumerate}
\usepackage{graphicx}
\usepackage{listings}
\usepackage{mathtools}
\usepackage{subfigure}
\usepackage{xspace}

\usepackage[usenames,dvipsnames]{xcolor}
\usepackage[bookmarks=false]{hyperref}
\hypersetup{
  pdffitwindow=true,
  pdfstartview={FitH},
  pdfnewwindow=true,
  colorlinks,
  linktocpage=true,
  linkcolor=Green,
  urlcolor=Green,
  citecolor=Green
}
\usepackage[capitalize,noabbrev]{cleveref}

\frenchspacing
\sloppy

\usepackage{tikz}
\usetikzlibrary{calc,trees,positioning,arrows}
\usetikzlibrary{automata}
\usetikzlibrary{trees}
\usetikzlibrary{decorations.pathmorphing} 
\usetikzlibrary{fit}         
\usetikzlibrary{backgrounds} 
\usetikzlibrary{shadows}
\usetikzlibrary{arrows.meta}
\tikzset{
  jump/.style={
     to path={
         let \p1=(\tikztostart),\p2=(\tikztotarget),\n1={atan2(\y2-\y1,\x2-\x1)} in
         (\tikztostart) -- ($($(\tikztostart)!#1!(\tikztotarget)$)!0.15cm!(\tikztostart)$)
         arc[start angle=\n1+180,end angle=\n1,radius=0.15cm] -- (\tikztotarget)}
  },
  jump/.default={0.5}
}


\tikzstyle{mynode}=[circle, thick, minimum size=0.7cm, draw=black!80]
\tikzstyle{observed}=[circle, thick, minimum size=0.7cm, draw=black!100, fill=black!20]
\tikzstyle{latent}=[circle, thick, minimum size=0.7cm, draw=black!80]
\tikzstyle{plate}=[rectangle, thick, inner sep=0.3cm, draw=black!100]
\tikzstyle{shadeplate}=[rectangle, thick, inner sep=0.4cm, draw=black!100]
\tikzstyle{table}=[circle,fill=blue!20,draw=black!100,inner sep=1pt, minimum size=30pt]
\tikzstyle{client}=[rectangle,fill=blue!20,draw=black!100,inner sep=1pt, minimum size=12pt]

\usepackage[backgroundcolor=white]{todonotes}

\def\Sset{\mathcal{S}}
\def\Aset{\mathcal{A}}
\def\Xset{\mathcal{X}}

\newcommand{\realset}{\mathbb{R}}

\newcommand{\indicator}[1]{\mathds{1}\!\left\{#1\right\}}
\newcommand{\E}[2]{\mathbb{E}_{#1} \left[#2\right]}
\newcommand{\prob}[2]{\mathbb{P}_{#1}\left(#2\right)}
\newcommand{\norm}[2]{\left\lVert#1\right\rVert_{#2}}
\newcommand{\abs}[1]{\left|#1\right|}

\newcommand{\determinant}[1]{\mathsf{det}\left(#1\right)}
\newcommand{\trace}[1]{\mathsf{trace}\left(#1\right)}

\newcommand{\probt}[1]{\mathbb{P}_t \left(#1\right)}
\newcommand*\dif{\mathop{}\!\mathrm{d}}

\newcommand{\mexp}{\ensuremath{\tt Exp4}\xspace}
\newcommand{\corralmexp}{\ensuremath{\tt CorralExp4}\xspace}
\newcommand{\mts}{\ensuremath{\tt MixTS}\xspace}
\newcommand{\ts}{\ensuremath{\tt TS}\xspace}
\newcommand{\lints}{\ensuremath{\tt LinTS}\xspace}
\newcommand{\units}{\ensuremath{\tt UniTS}\xspace}

\newcommand{\mmts}{\ensuremath{\tt mmTS}\xspace}

\def\Bregret{\mathcal{BR}}

\newtheorem{lemma}{Lemma}

\newtheorem{theorem}{Theorem}

\begin{document}

\twocolumn[

\aistatstitle{Thompson Sampling with a Mixture Prior}

\aistatsauthor{Joey Hong \And Branislav Kveton \And Manzil Zaheer}

\aistatsaddress{UC Berkeley$^*$ \And Amazon$^*$ \And Google DeepMind}

\aistatsauthor{Mohammad Ghavamzadeh \And Craig Boutilier}

\aistatsaddress{Google Research \And Google Research}]

\begin{abstract}
We study Thompson sampling (TS) in online decision making, where the uncertain environment is sampled from a \emph{mixture distribution}. This is relevant in multi-task learning, where a learning agent faces different classes of problems. We incorporate this structure in a natural way by initializing TS with a \emph{mixture prior}, and call the resulting algorithm \mts. To analyze \mts, we develop a novel and general proof technique for analyzing the concentration of mixture distributions. We use it to prove Bayes regret bounds for \mts in both linear bandits and finite-horizon reinforcement learning. Our bounds capture the structure of the prior, depend on the number of mixture components and their widths. We also demonstrate the empirical effectiveness of \mts in synthetic and real-world experiments.
\end{abstract}

\section{INTRODUCTION}
\label{sec:introduction}

\emph{Thompson sampling (TS)} \citep{ts} is arguably the most popular and practical class of exploration algorithms for \emph{stochastic bandits} \citep{bandit_book,ts} and \emph{reinforcement learning (RL)} \citep{rl_book,mdp_posterior_sampling}. However, in both settings, TS is almost exclusively applied with a \emph{unimodal prior} over model parameters \citep{ts,lints,mdp_posterior_sampling}. This is extremely limiting in a variety of settings, for instance, in a multi-task setting where a learning agent faces one of $L$ classes of bandit problems, each with a different distribution of model parameters. If this prior knowledge was expressed by a single unimodal distribution, it would generally be \say{wide} (hence uninformative), which can dramatically slow convergence of TS.

In this work, we incorporate \emph{mixture models} into TS for both stochastic bandits and RL. The idea behind mixture models is using latent variables to make a model more expressive \citep{bishop}.  In supervised learning, a more expressive model can better capture a complex population of sub-populations with similar features. For instance, \emph{Gaussian mixture models} (GMMs) \citep{clustering} are commonly used to cluster features in financial markets \citep{gmm_finance} and to identify classes of images \citep{bishop}. \emph{Topic models} \citep{lda}, which are mixtures of categorical distributions, are often used to analyze text data. Similarly, in online learning, algorithms can be more expressive by conditioning on a latent state \citep{hme}. In multi-task learning, where an agent faces a collection of tasks related through latent structure, we believe that TS can be substantially improved by using a more expressive prior.

We study TS with a \emph{mixture prior}, which is a joint probability distribution over an unobserved discrete latent state and model parameters. It is unclear \emph{a priori} if efficient algorithms exist for this problem class. From the computational perspective, the posterior distribution may not have a closed form; and thus may be hard to update efficiently. This is one reason why existing TS implementations use simple priors. Apart from computational issues, we might hope to exploit the problem structure to derive tighter regret bounds. The challenge is that the learning agent never observes the latent state. We address both challenges.

\renewcommand{\thefootnote}{\fnsymbol{footnote}}
\footnotetext[1]{The work was done while at Google Research.}
\renewcommand{\thefootnote}{\arabic{footnote}}

We make the following contributions. First, we propose a general algorithm, \emph{mixture Thompson sampling} (\mts), for a mixture prior with $L$ discrete latent states. \mts first samples a latent state from its posterior and then samples model parameters conditioned on that state. By explicitly modeling the latent state, the posterior can be efficiently maintained for common reward distributions, such as Bernoulli and Gaussian, using conjugacy. Second, we bound the $n$-round Bayes regret of \mts using a novel general analysis technique that accounts for jointly learning the model parameters and identifying the latent state; without ever observing it. We apply our technique in two settings: contextual linear bandit and finite-horizon RL. Finally, we evaluate \mts empirically in synthetic bandit and RL tasks, and in a task based on image classification using the CIFAR-100 dataset \citep{cifar}.

The main theoretical contribution of this work are the first Bayes regret bounds for TS with a mixture prior that are (i) sublinear in the number of rounds and (ii) depend on how informative the prior is. Specifically, the bounds depend on the structure of the prior, the number of mixture components and their width. When the prior is unimodal, $L = 1$, our bounds match the regret bounds of classical TS \citep{ts,lints}. On the other hand, when the mixture components have low width, the regret is determined by the cost of identifying the correct latent state \citep{latent_bandits_revisited}. \citet{latent_bandits_revisited} studied the same algorithm as \mts in bandits, but proved a linear regret bound for non-zero mixture component widths. In RL, we are the first to consider and analyze a mixture prior. 

\section{SETTING}
\label{sec:setting}

We consider an online decision-making problem where a learning agent interacts with an unknown environment sequentially over $n$ \emph{rounds}. We start with a multi-armed bandit setting and extend it to RL in \cref{sec:rl}. We adopt the following notation. Random variables are capitalized. The $i$-th entry of vector $v$ is $v_i$; if a vector $v_i$ is already indexed, then we denote its $j$-th entry by $(v_i)_j$. We use $\tilde{\mathcal{O}}$ for the big O notation up to logarithmic factors.

Our setting is defined as follows. In round $t \in [n]$, the agent takes an \emph{action} $A_t$ from an \emph{action set} $\Aset_t$ and observes \emph{reward} $Y_t \in \mathbb{R}$. The reward $Y_t$ is drawn i.i.d.\ from \emph{reward distribution} $P(\cdot \mid A_t; \theta)$. The distribution depends on the taken action $A_t$ and \emph{model parameters} $\theta \in \Theta$, where $\Theta$ is a set of feasible model parameters. We denote by $\mu_{\theta}(a) = \E{Y \sim P(\cdot \mid a; \theta)}{Y}$ the mean reward of action $a$ under model $\theta$, and assume that all rewards are $\sigma^2$-sub-Gaussian. We subscript the action set by $t$ as $\Aset_t$. This allows us to have changing action sets, which provides additional flexibility. Specifically, in contextual bandits, the context $X_t$ in round $t$ may influence which actions are possible, a dependence captured in $\Aset_t$. 

We denote by $\theta_*$ the true model parameters. In this work, we assume that $\theta_*$ is sampled from a \emph{mixture prior} $P_0$. The mixture prior is represented using a finite set of \emph{latent states} $\Sset$, where $|\Sset| = L$. Each latent state corresponds to a separate ``hypothesis'' for the parameter distribution. The model parameters $\theta_*$ are sampled as follows. First the true latent state is sampled as $S_* \sim P_0$ from the \emph{latent state prior}, then the model parameters are sampled as $\theta_* \sim P_0(\cdot \mid S_*)$ from the \emph{model parameter prior}. Formally, the distribution of $\theta_*$ is $\prob{}{\theta_* = \theta} = \sum_{s \in \Sset} P_0(\theta \mid s) P_0(s)$.

In multi-armed bandits, a typical goal is to maximize the expected $n$-round reward, or equivalently minimize the \emph{expected $n$-round regret}
$$
    \mathcal{R}(n; \theta_*) = \E{}{\sum_{t = 1}^n \mu_*(A_{t, *}) - \mu_*(A_t) \mid \theta_*}\,,
$$ 
where $\mu_*(a) = \mu_{\theta_*}(a)$ is the true mean reward of action $a$, $A_{t, *} = \max_{a \in \Aset_t} \mu_*(a)$ is the optimal action in round $t$, and the expectation is taken over both the randomness in the bandit algorithm and environment. Note that $\theta_*$ is fixed in $\mathcal{R}(n, \theta_*)$. In this work, we focus on an average performance over \emph{multiple problems}, each corresponding to different model parameters sampled from the prior. This is to capture the structure of the stochastic generative process in our analysis. By taking an expectation over $S_*$ and $\theta_*$, we obtain the \emph{$n$-round Bayes regret} \citep{russo_posterior_sampling}
$
    \Bregret(n) = \E{}{\mathcal{R}(n; \theta_*)}
$.

\section{ALGORITHM}
\label{sec:mixture_ts}

Thompson sampling \citep{ts,russo_posterior_sampling} is an algorithm that takes actions proportionally to being optimal under the posterior. This is usually implemented by first sampling model parameters $\theta_t$ from the posterior, then taking action $A_t = \arg\max_{a \in \Aset_t} \mu_{\theta_t}(a)$ that maximizes the mean reward under $\theta_t$. The posterior captures agent's uncertainty over the true model parameters $\theta_*$ conditioned on history. We denote the \emph{observation history} up to round $t$ by $H_t = (A_1, Y_1, \dots, A_{t-1}, Y_{t-1})$, and denote the respective conditional probability and expectation by $\prob{t}{\cdot} = \prob{}{\cdot \mid H_t}$ and $\E{t}{\cdot} = \E{}{\cdot \mid H_t}$.

Now we describe how Thompson sampling with a mixture prior works. We first note that the posterior over model parameters at round $t$ can be obtained by marginalizing over the latent state as
$
\probt{\theta_* = \theta} = 
\sum_{s \in \Sset} \probt{\theta_* = \theta \mid S_* = s}\probt{S_* = s} 
$.
Because of this structure, explicit modeling of the latent state allows for tractable sampling from and updates to the posterior. We denote the posterior by $P_t$, where $P_t(s) = \probt{S_* = s}$ and $P_t(\theta \mid s) = \probt{\theta_* = \theta \mid S_* = s}$. Sampling $\theta_*$ from the posterior is straightforward: first a latent state $S_t \sim P_t$ is sampled, then $\theta_t \sim P_t(\cdot \mid S_t)$ is sampled conditioned on $S_t$. We show that each component of the posterior can be computed tractably. 

The key insight is that each \emph{model parameter posterior} has form
$
P_t(\theta \mid s) \propto P_0(\theta \mid s) \prod_{\ell = 1}^{t-1} P(Y_t \mid A_t; \theta)
$,
and thus has a closed form when $P_0(\cdot \mid s)$ is conjugate to the reward distribution. This holds in many settings, such as Bernoulli rewards with a beta prior, and Gaussian rewards with a Gaussian prior. Moreover, we note that the latent state posterior can be written as
$
P_t(s) \propto P_0(s) \int_\theta P_t(\theta \mid s) \, d\theta
$.
The integral is the posterior predictive probability and can be computed efficiently when $P_0(\cdot \mid s)$ is conjugate to the reward distribution. The normalizing constant $\prob{}{H_t}$ is the same for all latent states. Since $\Sset$ is finite, we normalize $P_t(s)$ by dividing it with $\sum_{s = 1}^L P_t(s)$.

Based on the above, TS with a mixture prior can be implemented efficiently for many problems of interest. The resulting algorithm, \mts (\cref{alg:ts}), uses incremental posterior updates. In the bandit setting, our algorithm is an instance of \mmts \citep{latent_bandits_revisited} for latent bandits. Since \mts has a mixture prior, all model parameters live in the same parameter space, a key difference from \citet{latent_bandits_revisited} that allows us to analyze the concentration of the mixture posterior. In addition, we extend \mts to RL in \cref{sec:rl}.

\begin{algorithm}[t]
\caption{TS with a mixture prior (\mts)}\label{alg:ts}
\begin{algorithmic}[1]
  \State \textbf{Input:} Latent state prior $P_0$
  \State \phantom{\textbf{Input:}} model parameters priors $\{P_0(\cdot \mid s)\}_{s \in \Sset}$
  \State Initialize $P_1 \gets P_0$
  \For{$t \gets 1, \dots, n$}
    \State Sample $S_t \sim P_t$ and $\theta_t \sim P_t(\cdot \mid S_t)$
    \State Select $A_t \gets \arg\max_{a \in \Aset_t}\mu_{\theta_t}(a)$.
    \State Observe $Y_t$ and update 
    \State\quad 
    $
        P_{t+1}(\theta \mid s) \propto P_t(\theta \mid s)P(Y_t \mid A_t; \theta)\,, \ \forall s \in \Sset
    $
    \State\quad
    $
        P_{t+1}(s) \propto P_0(s) \int_{\theta} P_{t + 1}(\theta \mid s) \, d\theta
    $ 
  \EndFor
\end{algorithmic}
\end{algorithm}

\section{BAYES REGRET ANALYSIS}
\label{sec:analysis}

In this section, we prove a Bayes regret bound with a mixture prior. In \cref{sec:proof_sketch}, we provide a general analysis outline for \mts. We specialize it to contextual linear bandits in \cref{sec:lin_bandit} and extend it to RL in \cref{sec:rl}.

Bandit algorithms with latent variables are rare, and often lack a regret bound. The key step in our analysis is a novel construction of confidence intervals around latent variables. This is challenging because the latent variables are unobserved. Our analysis outline can be applied to any model, simply by specifying the confidence intervals. This shows the modularity and generality of our approach.

\subsection{General Analysis Outline}
\label{sec:proof_sketch}

Recall that $S_*$ and $\theta_*$ are the true latent state and model parameters, and let $\mu_*(a) = \mu_{\theta_*}(a)$. To simplify the sketch, we assume that $\mu_*(a) \in [0, 1]$; but \cref{thm:bayes_regret_linbandit_mixture} does not assume this.

Let $\bar{\mu}_t(a, s) = \E{\theta \sim P_t(\cdot \mid s)}{\mu_{\theta}(a)}$ be the posterior mean reward of action $a$ under latent state $s$, and $\sigma_t(a, s)$ be a high-probability confidence width for the model parameter posteriors $P_t(\cdot \mid s)$, that is $\prob{t}{\abs{\mu_*(a) - \bar{\mu}_t(a, s)} \geq \sigma_t(a, s)} \leq 1/n$. At a high level, our Bayes regret bounds include two terms. The first is due to concentration of the model parameter posteriors, and is bounded by the sum of confidence widths $\sum_{t = 1}^n \sigma_t(A_t, S_t)$. The second captures the identification of the latent state, and scales with $\sqrt{L n}$.

Let $A_{t, *} = \max_{a \in \Aset_t} \mu_*(a)$ be the optimal action in round $t$. From \citet{russo_posterior_sampling}, we can write the Bayes regret as
\begin{align}
\label{eq:bayes_regret_decomposition}
    \Bregret(n) 
    &= \E{}{\sum_{t = 1}^n\E{t}{\mu_*(A_{t,*}) - \bar{\mu}_t(A_{t, *}, S_*)}} + {} \\
    &\qquad \E{}{\sum_{t = 1}^n\E{t}{\bar{\mu}_t(A_t, S_t) - \mu_*(A_t)}}\,, \nonumber
\end{align}
where we use that $\bar{\mu}_t$ is a deterministic function of history $H_t$, and that $A_t, S_t$ and $A_{t,*}, S_*$ are i.i.d.\ given $H_t$. To bound the Bayes regret, we can bound each term individually as follows.

\textbf{Step 1.}
Bound the first term of \eqref{eq:bayes_regret_decomposition}. For round $t$, let event
\begin{align*}
    E_t = \left\{\forall a \in \Aset: \, \abs{\mu_*(a) - \bar{\mu}_t(a, S_*)} \leq \sigma_t(a, S_*)\right\}
\end{align*} 
 denote that the true mean is close to the posterior mean. Then
\begin{align}
\label{eq:bayes_regret_first_term}
    &\E{t}{\mu_*(A_{t,*}) - \bar{\mu}_t(A_{t, *}, S_*,)} \\
    &\leq
    \E{t}{(\mu_*(A_{t,*}) \!-\! \bar{\mu}_t(A_{t, *}, S_*))\indicator{\bar{E}_t}} \!+\!
    \E{t}{\sigma_t(A_{t, *}, S_*)} \nonumber
\end{align} 
where
$
    (\mu_*(A_{t,*}) - \bar{\mu}_t(A_{t, *}, S_*))\indicator{E_t}
    \leq \sigma_t(A_{t, *}, S_*)
$
is by definition of $E_t$. The first term of \eqref{eq:bayes_regret_first_term} can be bounded using the fact that event $\bar{E}_t$ is unlikely conditioned on $H_t$. The second term can be rewritten as $\E{t}{\sigma_t(A_{t, *}, S_*)} = \E{t}{\sigma_t(A_t, S_t)}$, using that $A_t, S_t$ and $A_{t,*}, S_*$ are i.i.d.\ conditioned on $H_t$. Finally, we sum over all rounds $t \in [n]$.

\textbf{Step 2.}
We want to bound the second term of \eqref{eq:bayes_regret_decomposition}. To do so, we first need to define \emph{confidence sets} over latent states. Formally, for each round $t$, we construct $C_t$ such that $S_* \in C_t$ holds with a high probability. Since the latent state is unobserved, we use a frequentist construction with a proxy statistic for how well the model parameter posterior of each latent state predicts the rewards. Let $N_t(s) = \sum_{\ell = 1}^{t-1} \indicator{S_\ell = s}$ be the number of times $s$ was sampled from posterior up to round $t$, and
\begin{align*}
    G_t(s) = \sum_{\ell = 1}^{t-1} \indicator{S_\ell = s} (\bar{\mu}_{\ell}(A_\ell, s) - \eta \sigma_{\ell}(A_\ell, s) - Y_\ell)
\end{align*}
be the total reward \say{excess} with respect to the posterior mean, where $\eta \in \mathbb{R}, \eta > 0$ is a scaling factor. Let $C_t = \{s \in \Sset: G_t(s) \leq \varepsilon\}$ be the set of latent states with at most $\varepsilon$ excess. 
We want to prove that $S_*$ lies in $C_t$ in round $t$ with a high probability,
\begin{align*}
\prob{}{\bigcup_{t=1}^n \{S_* \not\in C_t\}}
\leq 
\sum_{t = 1}^n \prob{}{S_* \not\in C_t}
=
\mathcal{O}(1) \,.    
\end{align*}
The key idea in the proof is that each $\bar{\mu}_t(A_\ell, S_*) - \eta \sigma_t(A_\ell, S_*) - Y_\ell < 0$ holds with a high probability conditioned on any history $H_t$, since we subtract the reward from its lower confidence bound. Since $\mu_*(A_\ell)$ is unknown, we substitute it with reward $Y_\ell$. We set $\varepsilon = \mathcal{O}(\sqrt{N_t(s) \log{n}})$ in $C_t$ to correct for reward noise. \citet{latent_bandits_revisited} consider a similar construction, but used prior means and widths. We achieve better regret bounds by using the posterior.

\textbf{Step 3.}
Now, we are ready to bound the second term of \eqref{eq:bayes_regret_decomposition}. Since regret at any round is trivially bounded by $1$, we have
\begin{align*}
    &\E{}{\sum_{t = 1}^n\bar{\mu}_t(A_t, S_t) - \mu_*(A_t)} \leq \E{}{\sum_{t=1}^n \indicator{S_t \not\in C_t}} + {} \\
    &\qquad 
    \E{}{\sum_{t = 1}^n\left(\bar{\mu}_t(A_t, S_t) - \mu_*(A_t) \right)\indicator{S_t \in C_t}}\,.
\end{align*}
Note that the first term can be bounded as
\begin{align*}
    \E{}{\sum_{t=1}^n \indicator{S_t \not\in C_t}} 
    &=
    \sum_{t = 1}^n \E{}{\prob{t}{S_t \not\in C_t}} \\ 
    &= 
    \sum_{t = 1}^n \E{}{\prob{t}{S_* \not\in C_t}} \\
    &= 
    \sum_{t = 1}^n \prob{}{S_* \not\in C_t} 
    = \mathcal{O}(1)\,,
\end{align*}
where we use that $S_t$ and $S_*$ are i.i.d.\ conditioned on $H_t$ for the first equality, and the bound derived in Step 2 for the second. Finally, we have
\begin{align}
\label{eq:bayes_regret_second_term}
    &\E{}{\sum_{t = 1}^n\left(\bar{\mu}_t(A_t, S_t) - \mu_*(A_t)\right)\indicator{S_t \in C_t}} \\
    &\,\leq \eta \E{}{\sum_{t = 1}^n\sigma_t(A_t, S_t)} 
    + {} \nonumber \\
    &\quad \E{}{\sum_{t = 1}^n\left(\bar{\mu}_t(A_t, S_t) - \eta \sigma_t(A_t, S_t) - Y_t\right) \indicator{S_t \in C_t}}
    \nonumber \,,
\end{align}
where we use that $\E{t}{Y_t \mid A_t, \theta_*} = \E{t}{\mu_*(A_t)}$. The first term of \eqref{eq:bayes_regret_second_term} is a sum of confidence widths, which decrease over time as the posterior concentrates. The second term of \eqref{eq:bayes_regret_second_term} can be bounded by the sum of the excesss $\sum_{s \in \Sset} G_{n + 1}(s)$, which is bounded by $\mathcal{O}(\sqrt{Ln \log{n}} + L)$ after we trivially bound the regret in the last round where each latent state is sampled. This is because in the last round $t$ where $S_t = s$, it must be true that $s \in C_t$, and thus $G_t(s)$ is bounded.




\subsection{Linear Bandits}
\label{sec:lin_bandit}

The above general analysis technique can be applied in various settings. Here we specialize it to a linear bandit with $d$ dimensions. In each round $t \in [n]$, a learning agent has a potentially changing action set $\Aset_t \subseteq \realset^d$ and takes action $A_t \in \Aset_t$. The agent observes reward $Y_t = A_t^\top\theta_* + \eta_t$, where $\theta_* \in \mathbb{R}^d$ is the unknown model parameter vector and $\eta_t \sim \mathcal{N}(0, \sigma^2)$ is a Gaussian noise. We assume that $\norm{a}{2} \leq \kappa$ for all rounds $t$ and $a \in \Aset_t$.

The prior is a mixture with $L$ components, indexed by latent states $s \in \Sset$. For each $s$, the model parameter prior is a Gaussian $P_0(\cdot \mid s) = \mathcal{N}(\cdot; \theta_{0, s}, \Sigma_{0, s})$, and we assume that $\theta_{0, s}$ is bounded as $\norm{\theta_{0, s}}{2} \leq 1$. This is a weaker assumption than in prior works, which typically assume that $\norm{\theta_*}{2}$ is bounded \citep{linucb,russo_posterior_sampling}. In round $t$, \mts samples $S_t \sim P_t$ and then $\theta_t \sim \mathcal{N}(\bar{\theta}_{t, S_t}, \Sigma_{t, S_t})$. Here $\bar{\theta}_{t, s}$ and $\Sigma_{t, s}$ are the posterior mean model parameter and its covariance, respectively, under latent state $s$ and are defined as
\begin{align}
  \Sigma_{t, s} &= (\Sigma_{0, s}^{-1} + \sigma^{-2} V_t)^{-1}\,, \nonumber \\
  \bar{\theta}_{t, s} &= \Sigma_{t,s}(\Sigma_{0, s}^{-1} \theta_{0, s} + \sigma^{-2} B_t)\,,
  \label{eq:mixture_posterior_linbandit}
\end{align}
where $V_t = \sum_{\ell=1}^{t-1} A_\ell A_\ell^\top$ and $B_t = \sum_{\ell=1}^{t-1} A_\ell Y_\ell$. The posterior mean reward of action $a$ and its confidence width are given by 
\begin{align*}
    \bar{\mu}_t(a, s) = a^\top\bar{\theta}_{s, t}, \
    \sigma_t(a, s) = \sqrt{2d\log(dn)} \norm{a}{\Sigma_{t, s}}\,.
\end{align*} 
We can bound the Bayes regret of \mts in this setting using the technique in \cref{sec:proof_sketch}. The proof is in \cref{sec:lin_bandit_proofs} and we state the bound below.

\begin{theorem}
\label{thm:bayes_regret_linbandit_mixture} Let $\lambda_{0, \max} = \max_{s \in \Sset}\lambda_{\max}(\Sigma_{0, s})$, where $\lambda_{\max}(\Sigma_{0, s})$ is the maximum eigenvalue of $\Sigma_{0, s}$ for latent state $s$. Let $\max_{a \in \Aset_t}\norm{a}{2} \leq \kappa$ hold in all rounds $t \in [n]$. Then the $n$-round Bayes regret of \mts is bounded as 
\begin{align}
  \Bregret(n) &\!\leq\!
  6\sigma d\sqrt{c_1 n \log(dn)} + 2\sigma\sqrt{Ln \log n} + c_2 \,,
\label{eq:bayes_regret_linbandit_mixture}
\end{align}
where
\begin{align*}
    c_1 = \left(1 + \frac{\kappa^2\lambda_{0, \max}}{\sigma^2} \right) \log\left(1 \!+\! \frac{\kappa^2\lambda_{0, \max} \ n}{\sigma^2 d}\right)\,,
\end{align*}
and $c_2$ is poly-logarithmic in $n$. 
\end{theorem}

\subsection{Discussion}

The bound has two main components: the regret for learning model parameters (Term $1$) under the assumption that the latent state is known, and the regret for identifying the latent state (Term $2$). Term $1$ is $\tilde{\mathcal{O}}(d \sqrt{c_1 n})$ and is of the same order as in linear TS ~\citep{russo_posterior_sampling}. The key difference is a prior-dependent constant $c_1$. Through $c_1$, Term 1 is linear in the maximum component width of the mixture prior $\sqrt{\lambda_{0, \max}}$. Term $2$ is $\tilde{\mathcal{O}}(\sqrt{L n})$ and is of the same order as identifying the true latent state among $L$ known models \citep{latent_bandits_revisited}.

Our bound does not depend on the latent state prior $P_0$. This is a shortcoming of our analysis, which constructs worst-case confidence sets for latent states, and is frequentist in this respect. We defer refinements of the analysis to future work. Another shortcoming is that we do not provide a matching lower bound. Although a Bayes regret lower bound exists for $K$-armed bandits \citep{lai87adaptive}, it is unclear how to apply it to structured problems. Seminal works on Bayes regret minimization \citep{russo_posterior_sampling,information_theory_posterior_sampling} also only derive upper bounds. We view deriving lower bounds as another avenue for future work.

Our analysis improves upon that of \citet{latent_bandits_revisited} by analyzing concentration of the model parameter posteriors. We attain $\tilde{\mathcal{O}}(d \sqrt{c_1 n} + \sqrt{L n})$ regret that is fully sublinear in $n$. In contrast, \citet{latent_bandits_revisited} have a regret bound $\tilde{\mathcal{O}}(c' n + \sqrt{L n})$, where $c'$ is a constant proportional to the maximum component width $\sqrt{\lambda_{0, \max}}$. This is because their analysis is agnostic to posterior improvements and treats prior uncertainty as a penalty, resulting in a linear regret bound.

Another natural comparison is to TS without the mixture prior. Since Bayes regret bounds are proved under the assumption of a correct prior, there are no other comparable Bayes regret bounds. However, we can compare to frequentist worst-case regret bounds, which hold even when the prior is misspecified. A state-of-the-art regret bound for \lints is $\tilde{\mathcal{O}}(d^{3 / 2} \sqrt{n})$ \citep{abeille17lints}. In contrast, our bound is $\tilde{\mathcal{O}}(d \sqrt{c_1 n} + \sqrt{L n})$, where $c_1$ scales with the maximum component width of the mixture prior $\sqrt{\lambda_{0, \max}}$ and $L$ denotes the number of latent states. With a sufficiently informative prior, $c_1 < d$; and with a small number of mixture components, $\sqrt{L} < d^{3 / 2}$; our bound improves over frequentist regret bounds for \lints.

\section{FINITE-HORIZON RL}
\label{sec:rl}

Next we extend our results to \emph{reinforcement learning (RL)} \citep{rl_book} in \emph{finite-horizon Markov decision processes (MDPs)} \citep{mdps}. First, we formalize RL with a mixture prior. Then, in \cref{sec:proof_sketch_rl}, we extend the general analysis outline from \cref{sec:proof_sketch}. Finally, in \cref{sec:tabular_mdp}, we apply the outline to derive a Bayes regret bound for \mts in a finite-horizon tabular MDP.

We have $n$ \emph{episodes} indexed by $t \in [n]$. In each episode, a learning agent interacts with an MDP for $h$ \emph{steps}. We refer to $h$ as the \emph{horizon}. We denote a finite-horizon MDP by $M = (\Xset, \Aset, R, T, h, \rho)$, where $\Xset$ is the state space, $\Aset$ is the action space, $R_M(x, a) \in [0, 1]$ is the mean reward when selecting action $a$ in state $x$, $T_M(x, a, x') = \prob{}{X_{i+1} = x' \mid X_i = x, A_i = a; M}$ is the probability of transitioning to state $x'$ if action $a$ is taken at state $x$, $h$ is the horizon, and $\rho$ the initial state distribution. We consider the special case of \emph{tabular MDPs}, where both $\Xset$ and $\Aset$ are finite sets. As a shorthand, let $T_M(x, a) = (T_M(x, a, x'))_{x' \in \Xset}$ be a vector for all transitions.

A policy $\pi = (\pi^i)_{i = 1}^h$ is a vector, one per step, where each $\pi^i: \Xset \to \Aset$ maps states to actions. We define the value of policy $\pi$ in MDP $M$ as
$
    V_M(\pi) = \E{}{\sum_{i = 1}^{h} R_M(X_i, A_i) \mid M, \pi}
$,
where $X_1 \sim \rho$, $A_i = \pi^i(X_i)$, and $X_{i+1} \sim \mathrm{Cat}(\cdot \mid T_M(X_i, A_i))$. The value is the expected total reward of acting under $\pi$ in $M$.

Let $M_*$ be the true MDP and $\pi_*$ be the optimal policy $\pi_* = \arg\max_{\pi} V_{M_*}(\pi)$ \citep{mdp_optimal_policy}.
We assume that $M_*$ is sampled hierarchically from a mixture prior $P_0$: first a latent state $S_* \sim P_0$ is sampled, then the MDP $M_* \sim P_0(\cdot \mid S_*)$. 
This generalizes prior work on TS in RL \citep{mdp_posterior_sampling,mdp_posterior_sampling_worst_case}, where the mixture prior is not considered. Recently, \citet{mixture_mdp} studied MDPs whose mean rewards and transition probabilities are linear mixtures, but assume the mean rewards and probabilities per component are known.
As a shorthand, we subscript by $*$ to denote statistics related to the true MDP $M_*$, such as $V_* = V_{M_*}$, and equivalently for $R_*$ and $T_*$. 
The Bayes regret of an algorithm over $n$ episodes is given by
$
    \Bregret(n) = \E{}{\sum_{t = 1}^n V_{*}(\pi_*) - V_{*}(\pi_t)}
$,
where $\pi_t$ is the policy chosen by the algorithm in episode $t$, and the randomness is over MDP $M_*$, policies selected by the learning agent, and observations. The history is given by $H_t = ((X_{\ell, i}, A_{\ell, i}, R_{\ell, i}))_{i \in [h], \, \ell \in [t - 1]}$, where $X_{\ell, i}, A_{\ell, i}, R_{\ell, i}$ are the state, action and reward for step $i$ of episode $\ell$. The reward of an episode is $Y_t = \sum_{i = 1}^h R_{t, i}$.

\subsection{General Analysis Outline}
\label{sec:proof_sketch_rl}

In finite-horizon RL, \mts operates as \cref{alg:ts}, but with MDP $M_t$ instead of parameters $\theta_t$ and policy $\pi_t$ instead of action $A_t$. That is, \mts in episode $t$ first samples latent state $S_t \sim P_t$, then the MDP conditioned on the sampled latent state $M_t \sim P_t(\cdot \mid S_t)$. Finally, the chosen policy in episode $t$ maximizes the value $\pi_t = \arg\max_{\pi}V_{M_t}(\pi)$. This algorithm is a generalization of PSRL \citep{mdp_posterior_sampling}, where a mixture prior is used. While bandit analyses can be often adapted to RL, we make a notable deviation. Prior works construct confidence intervals for each state of an MDP \citep{mdp_posterior_sampling,mdp_information_theory}. This cannot be done with latent variables, which are shared by all states. Therefore, we construct the intervals over entire MDP trajectories.

For episode $t$, let $\overline{V}_t(\pi, s) = \E{M \sim P_t(\cdot \mid s)}{V_M(\pi)}$ be the expected value of policy $\pi$ conditioned on $s$ and $H_t$. We have the following Bayes regret decomposition,
\begin{align}
\label{eq:bayes_regret_decomposition_mdp}    
    \Bregret(n)
    &=
    \E{}{\sum_{t = 1}^n \E{t}{V_*(\pi_*) - \overline{V}_t(\pi_*, S_*)}} + {}  \\
    &\quad \E{}{\sum_{t = 1}^n \E{t}{\overline{V}_t(\pi_t, S_t) -  V_*(\pi_t)}} \,, \nonumber
\end{align}
where we use that $S_t, \pi_t$ are distributed identically to $S_*, \pi_*$ conditioned on $H_t$.

The proof sketch is similar to the one in \cref{sec:proof_sketch}, but differs in two notable aspects. We list the main differences and defer the full sketch to \cref{sec:mdp_proofs}. First, the expected value of a policy under a latent state $\overline{V}_t(\pi, s)$ is used in place of the mean reward $\bar{\mu}_t(a, s)$. Second, in order to construct a confidence interval around $\overline{V}_t(\pi, s)$, we use the sum of confidence widths over steps of a trajectory.
Specifically, for any policy $\pi$, we have with high probability,
\begin{align*}
    &V_{M_t}(\pi) - \overline{V}_t(\pi, s)
    = \E{M \sim P_t(\cdot \mid s)}{V_{M_t}(\pi) - V_M(\pi)} \\
    &\,\leq \E{t}{h \sum_{i = 1}^h c_t(X_{t, i}, A_{t, i}, s) + \phi_t(X_{t, i}, A_{t, i}, s)}\,,
\end{align*} 
where we use the value difference lemma \citep{mdp_posterior_sampling}.
Here, we define a high-probability confidence intervals around the mean reward and transition probabilities, $c_t(x, a, s)$ and $\phi_t(x, a, s)$, respectively, for all state-action pairs $x, a$.  For $\bar{r}_t(x, a, s) = \E{M \sim P_t(\cdot \mid s)}{R_M(x, a)}$ as the posterior mean reward, we have $\prob{t}{\abs{R_M(x, a) - \bar{r}_t(x, a, s)} \geq c_t(x, a, s)} \leq 1/n$. Similarly, for $\bar{p}_t(x, a, x', s) = \E{M \sim P_t(\cdot \mid s)}{T_M( x, a, x')}$ as the posterior mean transition probability to state $x'$, and $\bar{p}_t(x, a, s)$ as a vector of such probabilities over all states $x' \in \Xset$, we have $\prob{t}{\norm{T_M(x, a) - \bar{p}_t(x, a, s)}{1} \geq \phi_t(x, a, s)} \leq 1/n$. The sum over $c_t(X_{t, i}, A_{t, i}, s)$ and $\phi_t(X_{t, i}, A_{t, i}, s)$ is used in place of $\sigma_t(A_t, S_t)$.

\subsection{Finite-Horizon Tabular MDPs}
\label{sec:tabular_mdp}

We consider finite-horizon tabular MDPs $M$ with Bernoulli rewards. In particular, for step $i$ of episode $t$, reward $R_{t, i}$ is sampled from a Bernoulli with mean $R_M(X_{t, i}, A_{t, i})$.

Recall that MDP $M = (\Xset, \Aset, R, T, h, \rho)$ has both mean rewards and transition probabilities. Let $R_M = (R_M(x, a))_{x, a}$ and $T_M = (T_M(x, a))_{x, a}$ be their respective concatenations across all state-action pairs. For true MDP $M_*$, which is unknown to the learning agent, let $R_*, T_*$ be these quantities. We consider the following generative process in sampling $M_*$. First a latent state $S_* \sim P_0$ is sampled. Then, the mean reward for state-action $x, a$ follows a beta prior $R_*(x, a) \sim \mathrm{Beta}(\alpha_{0, S_*}^R(x, a))$ with $\alpha_{0, s}^R(x, a) \in \mathbb{R}_{+}^2$ for any latent state $s$, and the transition probabilities follow a Dirichlet prior $T_*(x, a) \sim \mathrm{Dir}(\alpha_{0, S_*}^T(x, a))$ with $\alpha_{0, s}^T(x, a) \in \mathbb{R}_{+}^{|\Xset|}$. Here $\mathbb{R}_{+}$ denotes the space of positive reals. Finally, $M_* = (\Xset, \Aset, R_*, T_*, h, \rho)$ uses these sampled quantities.

Recall that in episode $t \in [n]$, \mts samples latent state $S_t \sim P_t$, then MDP $M_t \sim P_t(\cdot \mid S_t)$. Sampling $M_t$ consists of independently sampling, for each $x, a$, mean rewards $R_{M_t}(x, a) \sim \mathrm{Beta}(\alpha_{t, S_t}^R(x, a))$ and transition probabilities $T_{M_t}(x, a) \sim \mathrm{Dir}(\alpha_{t, S_t}^T(x, a))$. For latent state $s$, we denote by
$\alpha_{t, s}^R(x, a), \ \alpha_{t, s}^T(x, a)$
the parameters of the respective Dirichlet posteriors. Specifically,
\begin{align}
\label{eq:mixture_posterior_tabular_mdp_r}
    \bar{r}_t(x, a, s) &= \frac{(\alpha^R_{t, s}(x, a))_1}{\norm{\alpha^R_{t, s}(x, a)}{1}}\,,\\
    c_t(x, a, s) &= \sqrt{\frac{2\log(2|\Xset||\Aset|n)}{\norm{\alpha^R_{t, s}(x, a)}{1} + 1}}\,, \nonumber
\end{align}
are the posterior mean and confidence width for the mean reward under $x, a$. Similarly, we have
\begin{align}
\label{eq:mixture_posterior_tabular_mdp_p}
    \bar{p}_t(x, a, x', s) &= \frac{(\alpha^T_{t, s}(x, a))_{x'}}{\norm{\alpha^T_{t, s}(x, a)}{1}} \,,\\
    \phi_t(x, a, s) &= \sqrt{\frac{4|\Xset|\log(4|\Xset||\Aset|n)}{\norm{\alpha^T_{t, s}(x, a)}{1} + 1}}\,, \nonumber
\end{align}
for the transition probabilities. We simply state the Bayes regret bound and defer a full proof to \cref{sec:mdp_proofs}.

\begin{theorem}
\label{thm:bayes_regret_mdp_mixture} Let 
$$
\Lambda_{0, s} = \min\left\{\min_{x, a}\norm{\alpha_{0, s}^R(x, a)}{1}, \min_{x, a}\norm{\alpha_{0, s}^T(x, a)}{1}\right\}
$$
represent how concentrated the reward and transition priors are for latent state $s$, where higher values correspond to lower prior widths. Let $\Lambda_{0, \min} = \min_{s \in \Sset} \Lambda_{0, s}$. Then the $n$-episode Bayes regret of \mts is bounded
\begin{align*}
    &\Bregret(n) \leq \\
    &\ 6|\Xset|h^{3/2}\sqrt{c_1|\Aset|n \log(4|\Xset||\Aset|n)}
    \!+\! \sqrt{Lhn\log{n}} \!+\! c_2 \,.
\end{align*}
where
\begin{align*}
    c_1 = \log\left(1 + \frac{hn}{2 |\Xset||\Aset|\Lambda_{0, \min}}\right)\,,
\end{align*}
and $c_2$ is poly-logarithmic in $n$. 
\end{theorem}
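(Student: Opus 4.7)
The proof follows the four-step outline of \cref{sec:proof_sketch} with the RL-specific substitutions summarized in \cref{sec:proof_sketch_rl}: replace $\bar\mu_t(a,s)$ by $\overline V_t(\pi,s)$, replace the per-round confidence width $\sigma_t(a,s)$ by the expected trajectory-aggregated width
\[
\sigma_t(\pi,s) \;=\; \E{t}{\sum_{i=1}^h h\,c_t(X_{t,i},A_{t,i},s)+\phi_t(X_{t,i},A_{t,i},s)\,\Big|\,\pi_t=\pi}\,,
\]
(which is an upper bound on $|V_M(\pi)-\overline V_t(\pi,s)|$ w.h.p.\ by \cref{lem:mdp_value_difference}), and replace the reward $Y_t\in[0,1]$ by the episode return $Y_t\in[0,h]$. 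Starting from the decomposition \eqref{eq:bayes_regret_decomposition_mdp}, the goal is to bound the first term by a sum of trajectory widths, and the second term by the same sum plus a latent-state identification term of order $\sqrt{Lnh\log n}$.

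\textbf{Steps 1--2: confidence set for $S_*$.} Define $G_t(s)=\sum_{\ell<t}\indicator{S_\ell=s}\bigl(\overline V_\ell(\pi_\ell,s)-\eta\,\sigma_\ell(\pi_\ell,s)-Y_\ell\bigr)$ and $C_t=\{s:G_t(s)\leq\varepsilon\}$. For $s=S_*$, the terms $\overline V_\ell(\pi_\ell,S_*)-\eta\sigma_\ell(\pi_\ell,S_*)-V_*(\pi_\ell)$ are nonpositive w.h.p.\ by the trajectory-width construction, and $V_*(\pi_\ell)-Y_\ell$ is a martingale difference with range $[-h,h]$. An Azuma--Hoeffding bound with $\varepsilon=\Theta(h\sqrt{N_t(s)\log n})$ then yields $\mathbb{P}(\bigcup_{t\le n}\{S_*\notin C_t\})=\mathcal O(1/n)$, and the analogous statement for $S_t$ follows since $S_t,S_*$ are i.i.d.\ given $H_t$.

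\textbf{Steps 3--4: bounding the two terms.} For the first term of \eqref{eq:bayes_regret_decomposition_mdp}, intersect with the event that $V_*(\pi_*)-\overline V_t(\pi_*,S_*)\leq \sigma_t(\pi_*,S_*)$ (which holds w.h.p.\ by \cref{lem:mdp_value_difference} applied to the conditional posterior), so this term is at most $\E{}{\sum_t \sigma_t(\pi_*,S_*)}+\mathcal O(1)$; by conditional exchangeability this equals $\E{}{\sum_t\sigma_t(\pi_t,S_t)}+\mathcal O(1)$. For the second term, truncate the per-episode regret at $h$ and split on $\{S_t\in C_t\}$; the complement contributes $\mathcal O(h)$ by Step~2. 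On the event $\{S_t\in C_t\}$, add and subtract $\eta\sigma_t(\pi_t,S_t)+Y_t$ and use $\E{t}{Y_t\mid\pi_t}=V_{M_t}(\pi_t)$ together with $\overline V_t(\pi_t,S_t)-V_*(\pi_t)$ being a martingale increment (conditionally on sampling), to get
\[
\E{}{\sum_{t=1}^n\bigl(\overline V_t(\pi_t,S_t)-V_*(\pi_t)\bigr)\indicator{S_t\in C_t}}
\;\leq\;\eta\,\E{}{\sum_{t=1}^n\sigma_t(\pi_t,S_t)} \;+\; \E{}{\sum_{s\in\Sset}G_{n+1}(s)}+\mathcal O(h)\,,
\]
and the last expectation is $\mathcal O(h\sqrt{Lnh\log n})$ after trivially charging the final episode each latent state is drawn in (using $\varepsilon=\Theta(h\sqrt{N_t(s)\log n})$ and $\sum_s\sqrt{N_{n+1}(s)}\le\sqrt{Ln}$).

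\textbf{Main obstacle.} The crux is turning the abstract trajectory-width sum $\E{}{\sum_t\sigma_t(\pi_t,S_t)}$ into the explicit $|\Xset|h\sqrt{|\Aset|nh\log(4|\Xset||\Aset|n)\log(1+nh/(2|\Xset||\Aset|\Lambda_{0,\min}))}$ bound. Because the posterior counts $\alpha^R_{t,s}$ and $\alpha^T_{t,s}$ are updated from every observed $(X_{\ell,i},A_{\ell,i},R_{\ell,i},X_{\ell,i+1})$ regardless of the sampled latent state, their $L_1$ norms equal the empirical visit counts $N_t(x,a)$ plus the prior mass $\Omega(\Lambda_{0,\min})$. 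This reduces the sum to $\sum_{t,i}(h\,c_t+\phi_t)$ which, after replacing the expectation over trajectories by actual visits via a standard martingale argument, becomes a pigeonhole/potential-function sum of $1/\sqrt{N_t(x,a)+\Lambda_{0,\min}}$ across $|\Xset||\Aset|$ pairs. Applying the usual $\sum_{k=1}^{N}1/\sqrt{k}=\mathcal O(\sqrt N)$ bound together with Cauchy--Schwarz over $(x,a)$, and the fact that the dominant width $\phi_t$ carries the extra $\sqrt{|\Xset|}$ factor while the prefactor $h$ in front of $c_t$ comes from \cref{lem:mdp_value_difference}, yields the stated $|\Xset|h\sqrt{|\Aset|nh\,\log(\cdot)\log(\cdot)}$ term. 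Carefully accounting for the $\Lambda_{0,\min}$-dependent logarithm inside the potential (to capture concentration of the prior) and ensuring that all low-probability failure events contribute only to the polylog constant $c$ is the main bookkeeping challenge.
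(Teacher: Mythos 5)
Your proposal follows essentially the same route as the paper: the same four-step outline with trajectory-aggregated widths via \cref{lem:mdp_value_difference}, the same over-estimation statistic $G_t(s)$ and confidence set $C_t$, the same split on $\{S_t\in C_t\}$, and the same pigeonhole/potential argument over state-action visit counts with denominators $N_t(x,a)+\Lambda_{0,\min}$ to produce the first term. The structure is sound.

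There is, however, one concrete quantitative gap in your Step~2 that prevents you from reaching the theorem as stated. You invoke Azuma--Hoeffding for the martingale differences $V_*(\pi_\ell)-Y_\ell$ using only that their range is $[-h,h]$, which forces $\varepsilon=\Theta(h\sqrt{N_t(s)\log n})$ and hence $\sum_{s}G_{n+1}(s)\lesssim h\sqrt{Ln\log n}=\sqrt{Lnh^2\log n}$ --- a factor $\sqrt{h}$ worse than the $\sqrt{Lnh\log n}$ term in the theorem. The paper instead uses that the episode return $Y_\ell=\sum_{i=1}^h R_{\ell,i}$ is a sum of $h$ Bernoulli variables and is therefore $(h/4)$-sub-Gaussian (not $(h^2/4)$-sub-Gaussian), so the correct threshold is $\varepsilon=\sqrt{hN_t(s)\log n}$, which with $\sum_s\sqrt{N_{n+1}(s)}\le\sqrt{Ln}$ gives exactly $\sqrt{Lnh\log n}$. (Your own displayed bound $\mathcal O(h\sqrt{Lnh\log n})$ is inconsistent even with your choice of $\varepsilon$; it should be $h\sqrt{Ln\log n}$ under that choice.) Everything else --- the handling of the complement of the concentration event via the Beta/Dirichlet sub-Gaussian tails of \cref{lem:beta_dir_sub-gaussianity}, and the width-sum bound of \cref{lem:tabular_mdp_sum} --- matches the paper's argument, so fixing the sub-Gaussian parameter in the Azuma step recovers the stated bound.
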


Similarly to \cref{thm:bayes_regret_linbandit_mixture}, the above regret bound decomposes into the regret due to learning the MDP under the assumption that the latent state is known (Term $1$), and the regret due to identifying the correct latent state (Term $2$). Term $1$ is $\tilde{\mathcal{O}}(|\Xset|h^{3/2}\sqrt{c_1|\Aset|n})$ and matches classical TS bounds \citep{mdp_posterior_sampling}. The prior width is captured by $\Lambda_{0, \min}$ in $c_1$, which represents the minimum pseudo-counts in our beta and Dirichlet priors. Roughly speaking, the variance of beta and Dirichlet distributions is bounded by the reciprocal of these counts \citep{dirichlet_concentration}. So, when $\Lambda_{0, \min}$ is large, the beta and Dirichlet priors over mean rewards and transitions have low widths. Through $c_1$, Term $1$ goes to zero in this regime. Then the regret is dominated by Term $2$, which is $\tilde{\mathcal{O}}(\sqrt{Lhn})$ for identifying the correct latent state.

\section{EXPERIMENTS}
\label{sec:experiments}

We evaluate \mts in a synthetic and real-world problems. The goals of our experiments are the following: (1) assess the degree to which the Bayes regret bounds in \cref{thm:bayes_regret_linbandit_mixture,thm:bayes_regret_mdp_mixture} match the actual regret, (2) show that \mts outperforms TS with a less-informative unimodal prior and other online model selection algorithms in a challenging real-world problem, and (3) show that \mts still performs well when extended to RL settings.

\subsection{Synthetic Linear Bandit}

We begin with a synthetic $d$-dimensional Gaussian linear bandit where $d = 30$. We consider up to $L = 30$ latent states. The latent state prior is uniform, $P_0(s) = 1 / L$ for each $s$. The model parameter prior is an isotropic Gaussian $P_0(\cdot \mid s) = \mathcal{N}(\cdot; \theta_{0, s}, \sigma_0^2 I_d)$. The $i$-th entry of $\theta_{0, s}$ is $0.9$ when $i = s$, and $0.1$ otherwise. The action set is constant over all rounds $\Aset_t = \Aset \subseteq \mathbb{R}^d$ and consists of all $d$-dimensional indicator vectors. The reward for action $A_t$ is sampled from a Gaussian $Y_t \sim \mathcal{N}(A_t^\top \theta_*, \sigma^2)$ with $\sigma = 0.1$. The horizon is $n = 1,000$ rounds. We run $\mts$ $200$ times, with $S_*$ and $\theta_*$ sampled from the prior at the beginning of each run. We vary two quantities in \cref{thm:bayes_regret_linbandit_mixture}, the prior width $\sigma_0 = \lambda_{0, \max}$ and number of latent states $L$, and assess their effect on regret.

For each $\sigma_0$ and $L$, we use the mean regret over multiple runs, where in each run, model parameters are drawn as $\theta_* \sim P_0$, to approximate the Bayes regret. The regret is reported in \cref{fig:experiments}, together with the upper bound in \cref{thm:bayes_regret_linbandit_mixture}. The upper bound is multiplied by $1 / 30$, which changes the scale but preserves the shape. We observe that our bound correctly estimates the shape of the empirical regret as a function of $\sigma_0$. In a similar experiment, where $\sigma_0 = 0.05$ is fixed and we vary the number of latent states $L$, we again observe that our bound correctly estimates the shape of the empirical regret as a function of $L$. We conclude that \cref{thm:bayes_regret_linbandit_mixture} scales correctly with the parameters of our problem class.

\subsection{Image Classification}

In our second experiment, we consider an image classification problem with a mixture of high-level tasks. We use the CIFAR-100 dataset \citep{cifar}, which consists of $60,000$ images of size $32 \times 32$. There are $50,000$ training and $10,000$ test images. Each image belongs to one of $L = 100$ classes (image labels). 

We treat each class as a \emph{task}, so that images in class $s$ have high reward when the task is $s$. At the beginning of each run, a class is sampled as $S_* \sim P_0$, where $P_0(s) = 1 / L$ for all $s$. In round $t$, the action set $\Aset_t$ consists of $10$ randomly chosen images from the CIFAR-100 test set, where one image is guaranteed to be from class $S_*$. The reward of an image from class $S_*$ is $\mathrm{Ber}(0.9)$ and for all other classes is $\mathrm{Ber}(0.1)$. The horizon is $n = 500$ rounds. For such short horizons, the effect of the prior is more noticeable. We cast this problem as a linear bandit with features from a state-of-the-art EfficientNet-L2 network~\cite{Xie_2020_CVPR, efficientnet,foret2021sharpnessaware}. This is a convolutional neural network pretrained on both ImageNet~\citep{ILSVRC15} and unlabeled JFT-300M~\citep{jft} with input resolution $475$, and fine-tuned on the CIFAR-100 training set. Each action $a \in \Aset_t$ is a $100$-dimensional feature vector, the embedding after applying the network.

The mixture prior is obtained by clustering similar tasks from the CIFAR-100 training set. This is done as follows. First, we sample $1000$ random datasets of size $n = 500$ from the training set. For each dataset, we randomly choose the class $S_* \sim P_0$, and assign reward one to images from class $S_*$ and zero otherwise. Second, we fit a linear model to each dataset, where the image features are generated as above. Finally, we fit a GMM with $L$ components to the parameter vectors of the trained linear models, generating cluster means and covariances $(\theta_{0, s}, \Sigma_{0, s})_{s \in \Sset}$. The model parameter prior for $s$ is $P_0(\cdot \mid s) = \mathcal{N}(\cdot; \theta_{0, s}, \Sigma_{0, s})$.

We compare \mts to four baselines: \ts, \units, \mexp \citep{exp4}, and \corralmexp. 
\ts is Thompson sampling with an uninformative Gaussian prior $\mathcal{N}(\mathbf{0}, I_d)$ over model parameters. \units is TS with a unimodal Gaussian prior fit to the same data as the GMM. This baseline shows the importance of using mixtures, as opposing to just using past data. \mexp uses the prior means $(\theta_{0, s})_{s \in \Sset}$ as $L$ experts, where the action of expert $s$ is $\arg\max_{a \in \Aset_t} a^\top \theta_{0, s}$. The actions are a weighted vote of the experts, where better experts have higher weights. Finally, \corralmexp uses \mexp to track experts, but additionally adapts the parameters of each expert so that in round $t$, the action of expert $s$ is $\arg\max_{a \in \Aset_t} a^\top \bar{\theta}_{t, s}$, where $\bar{\theta}_{t, s}$ is defined as in \eqref{eq:mixture_posterior_linbandit}.  \corralmexp is an instance of a corralling bandit algorithm \citep{corralling_exp,corralling,corralling_2}, where a master (\mexp) switches between base algorithms (linear regressors). We measure the mean reward of each method, averaged over $100$ independent runs. Note that all TS algorithms are misspecified in this experiment, because the models are not linear and the reward noise is not Gaussian. We use $\sigma = 0.5$ since the rewards are in $[0, 1]$. As shown in \cref{fig:experiments}, \mts greatly outperforms \units and \ts, especially during the cold-start regime, due to using a strong mixture prior fitted to existing data. \mts also outperforms \mexp and \corralmexp by explicitly leveraging the latent state posterior to switch between models. Although \corralmexp uses the same model updates, it switches between the models using an adversarial algorithm.

\begin{figure*}[ht]
\centering
\begin{minipage}{0.32\textwidth}
    \includegraphics[width=\linewidth]{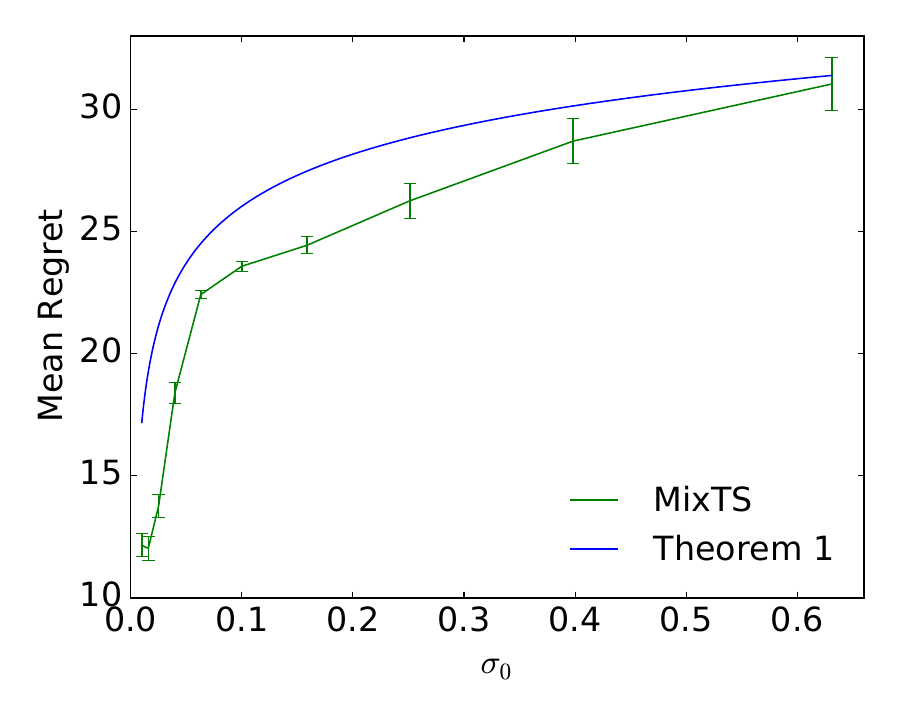}
\end{minipage}
\begin{minipage}{0.32\textwidth}
    \includegraphics[width=\linewidth]{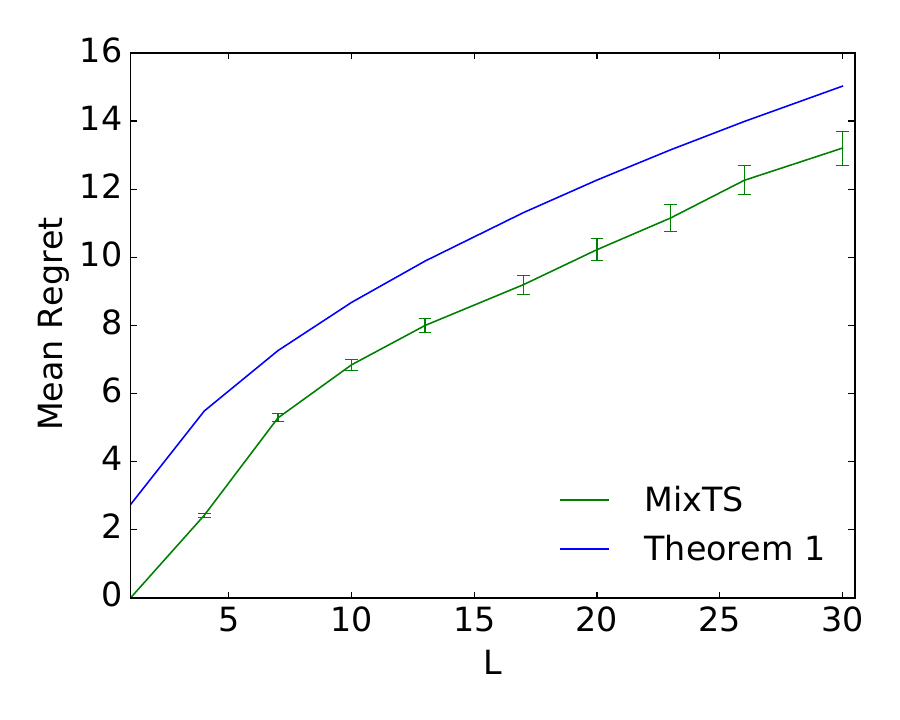}
\end{minipage}
\begin{minipage}{0.33\textwidth}
    \includegraphics[width=\linewidth]{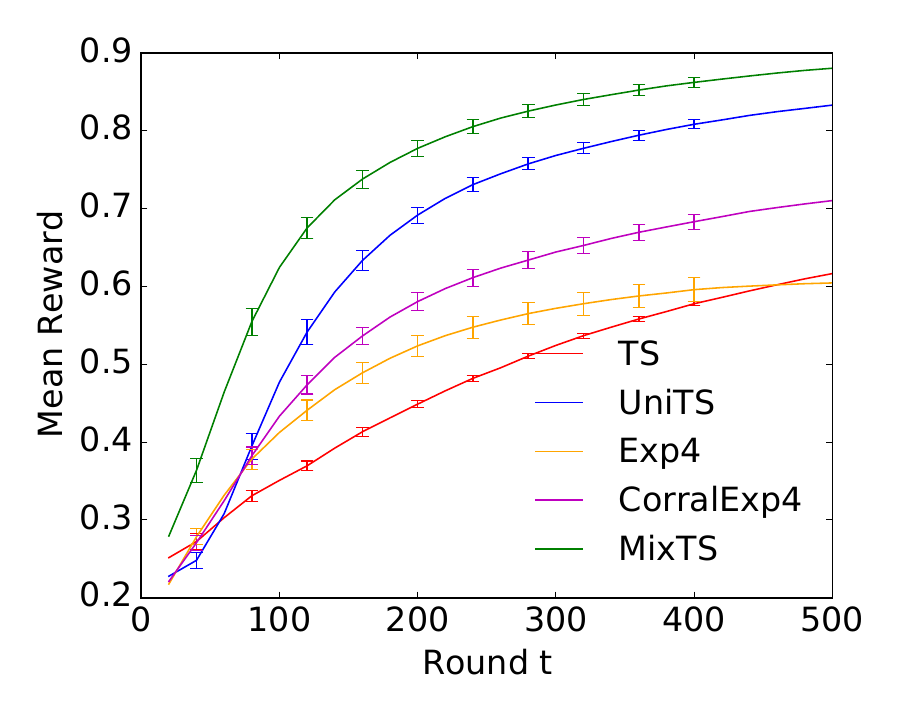}
\end{minipage}
\vspace{-3mm}
\caption{\emph{Left}: Bayes regret as function of prior width $\sigma_0$. \emph{Middle}: Bayes regret as function of the number of latent states $L$. \emph{Right}: Mean reward on a CIFAR-100 classification bandit.}
\label{fig:experiments}
\vspace{-0.1in}
\end{figure*}

\subsection{Synthetic MDP}
\label{sec:mdp_experiment}
In our final experiment, we consider a synthetic finite-horizon MDP based on the \emph{RiverSwim} environment \citep{mdp_posterior_sampling}. \emph{RiverSwim} consists of $|\Xset|$ states arranged in a chain. The agent starts at the state in the middle and at every time step, can choose to swim right or left, $|\Aset| = 2$. The environment is parameterized by a latent state that denotes the direction of the current, which can be right or left, $L = 2$. At a high level, swimming with the current is always successful, but swimming against the current likely fails. If the current is to the left, the agent receives a small reward for swimming left at the leftmost state, but receives a much larger reward for swimming right at the rightmost state; if the current is to the right, the opposite holds. The optimal policy involves swimming against the current to receive the large reward. The prior mean MDP when the current is to the left is shown in \cref{fig:riverswim}. The MDP when the current is to the right is symmetric.

In our experiments, we consider $|\Xset| = 10$ and horizon $h = 20$. The latent state prior is uniform, $P_0(s) = 1/2$ for $s$ as left or right. The MDP prior, conditioned on each latent state, consists of beta and Dirichlet priors for the mean reward and transition probabilities for each state-action pair $(x, a)$, such that the mean MDP under the prior matches the values in \cref{fig:riverswim}. The number of episodes is $n = 1,000$ episodes, and we run \mts $500$ times on independent samples of the MDP from the prior. In \cref{fig:riverswim}, we compare the mean regret over the $500$ runs of \mts against PSRL \citep{mdp_posterior_sampling}, which is a TS algorithm that uses a uniform prior over rewards and transitions. \mts greatly outperforms PSRL because it identifies the correct latent state, or direction of the current, much more quickly than PSRL learns the reward and transitions from scratch. 

\begin{figure*}[ht]
    \centering
    \hfill
\begin{minipage}{0.64\textwidth}\hspace{-0.5in}
    \begin{tikzpicture}[>=latex,text height=1.5ex,text depth=0.25ex]
    \matrix[row sep=0.9cm,column sep=0.9cm] {
    \node (s1) [latent] {$x_1$}; &
    \node (s2) [latent] {$x_2$}; &
    \node (s3) [latent] {$x_3$}; &
    \node (s4) [latent] {$x_4$}; &
    \node (s5) [latent] {$x_5$}; &
    \\
    };
    \path[->]
     (s1) edge[loop above] node {0.6} (s1)
     (s1) edge[bend left=20] node[above] {0.4} (s2)
     (s1) edge[dashed, loop left, draw=black] node[above=1mm] {1} node[below=12mm]{$\qquad R_{\bar{M}}(x_1, \text{left}) = 0.005$} (s1)
     (s2) edge[bend left=20] node[below] {0.05} (s1)
     (s2) edge[dashed, bend left=80, draw=black] node[below] {1} (s1)
     (s2) edge[loop above] node {0.6} (s2)
     (s2) edge[bend left=20] node[above] {0.35} (s3)
     (s3) edge[bend left=20] node[below] {0.05} (s2)
     (s3) edge[dashed, bend left=80, draw=black] node[below] {1} (s2)
     (s3) edge[loop above] node {0.6} (s3)
     (s3) edge[bend left=20] node[above] {0.35} (s4)
     (s4) edge[bend left=20] node[above] {0.35} (s5)
     (s4) edge[dashed, bend left=80, draw=black] node[below] {1} (s3)
     (s4) edge[loop above] node {0.6} (s4)
     (s4) edge[bend left=20] node[below] {0.05} (s3)
     (s5) edge[loop right] node[above=1mm] {0.6} node[above=6mm] {$R_{\bar{M}}(x_5, \text{right}) = 0.9$} (s5)
     (s5) edge[bend left=20] node[below] {0.4} (s4)
     (s5) edge[dashed, bend left=80, draw=black] node[below] {1} (s4)
    ;
    \end{tikzpicture}
\end{minipage}
\hfill
\begin{minipage}{0.32\textwidth}
\includegraphics[width=\linewidth]{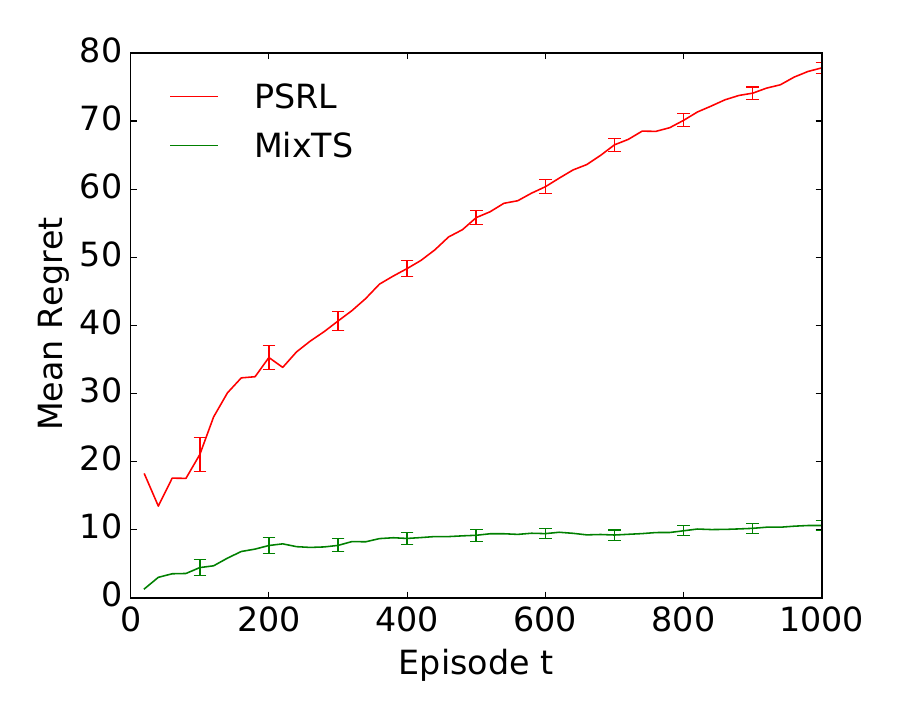}
\end{minipage}
\vspace{-3mm}
\caption{\emph{Left:} \emph{RiverSwim} with $|\Xset| = 5$ and current to the left. Solid and dashed arrows represent transitions under actions ``left" and ``right", respectively. Numbers denote the mean reward and transition probabilities of the mean MDP $\bar{M}$ under the prior. \emph{Right:} Mean regret on finite-horizon \emph{RiverSwim} environment.}
\label{fig:riverswim}
\end{figure*}

\section{RELATED WORK}
\label{sec:related_work}

\textbf{Thompson sampling.}
Thompson sampling is known for its computational efficiency and strong empirical performance \citep{ts,chapelle11empirical,lints}. \citet{russo_posterior_sampling} derived first Bayes regret bounds for TS in bandits and RL \citep{mdp_posterior_sampling}. We build on these works by considering a mixture prior. By explicitly modeling a latent state, we can implement TS efficiently, as well as derive improved prior-dependent Bayes regret bounds. Alternatively, information theory has been used to derive Bayes regret bounds \citep{information_theory_posterior_sampling,mdp_information_theory}. These proofs rely on the entropy of posterior distributions, which do not have closed forms for mixtures. Recent works applied approximate TS to complex structured problems \citep{ts_complex,ts_graphical_model}. Such algorithms are general, but can only be analyzed in limited settings with strong assumptions. We consider a special prior structure, and derive improved regret bounds for bandits and RL.

A related work on TS with mixture distributions is \citet{urteaga18mixture}. The setting of this work is completely different because they study a mixture reward distribution. In comparison, we study a mixture of model parameters. To make this distinction clear, consider a linear bandit. \citet{urteaga18mixture} would have non-Gaussian rewards sampled from a Gaussian mixture model (GMM). We would have Gaussian rewards with model parameters sampled from a GMM. More recently, \citet{bandit_gmm} proposed a non-parametric GMM over the rewards in the bandit setting. This is another instance of a mixture reward distribution.

\textbf{Online model selection.}
Our work is also related to online model selection, as each latent state corresponds to a different hypothesis for the distribution of the environment. Identifying the true latent state is analogous to selecting the best-performing base model. \mexp \citep{exp4} is one of the earliest algorithms for solving this problem in adversarial environments. Bayesian policy-reuse (BPR) \citep{bpr} could be used in stochastic environments but it does not have theoretical guarantees. More recently, in corralling bandits, a master algorithm learns the best-performing base bandit algorithm. \citet{corralling_exp} proposed a modified version of \mexp as the master. Corralling algorithms have also been extended to the stochastic setting \citep{corralling,corralling_2,model_selection_linbandit}. In all above works, the base algorithm is updated only when selected by the master. In our work, because of the full Bayesian treatment, all mixture components are always updated, which increases statistical efficiency. As shown in \cref{sec:experiments}, \mts outperforms multiple online model selection baselines.

\textbf{Latent bandits.}
Our setting is also an instance of latent bandits, where bandit instances are parameterized by a finite set of latent states, and each one corresponds to a different hypothesis over reward models \citep{latent_bandits,latent_contextual_bandits,latent_bandits_revisited}. In such structured environments, it is natural to consider a mixture prior that is learned from existing data, each component of the prior being a model distribution. However, most previous works only considered a single fixed model per latent state \citep{latent_bandits}. The closest work is \citet{latent_bandits_revisited}, who proposed a TS algorithm \mmts. In a bandit, \mts is an instance of \mmts where the conditional models are mixture components and share the same parameter space. This distinction is important, as we explicitly analyze the concentration of the mixture posterior to derive sublinear regret bounds. The regret bounds of \citet{latent_bandits_revisited} are agnostic to posterior improvements and can be linear. We also apply \mts to reinforcement learning, which in turn generalizes \mmts.

\section{CONCLUSIONS}
\label{sec:conclusions}

We propose Thompson sampling with a mixture prior (\mts) for online decision making. The mixture prior is parameterized by a discrete latent state, and yields a general and tractable algorithm that can be broadly analyzed, in both bandit and RL settings. Our regret bounds reflect the structure of the prior, the number of mixture components and their widths. We evaluate \mts on both synthetic and an image classification problems, and demonstrate that it performs well.

This work is a step towards analyzing TS in realistic models with latent variables. Our regret bounds depend on the number and width of the prior mixture components, but not on the latent state prior, which leaves room for improvement. We also only consider a flat discrete latent state. More expressive latent structures are an interesting direction for future work.

\bibliographystyle{abbrvnat}
\bibliography{paper}

\appendix
\onecolumn

\section{Linear Bandit Proofs}
\label{sec:lin_bandit_proofs}
\subsection{Useful Lemmas}

\begin{lemma}
\label{lem:gaussian_norm_tail_bound} Let $X \in \mathbb{R}^d$ be a random vector sampled from the multivariate Gaussian $X \sim \mathcal{N}(0, \Sigma)$. For any $\varepsilon \geq 0$, define event $E = \left\{ \norm{X}{\Sigma^{-1}} \geq \varepsilon \right\}$. Then,
\begin{align*}
    \E{}{\norm{X}{\Sigma^{-1}} \indicator{E}} 
    \leq 
    \frac{1}{\sqrt{2\pi}} d^{3/2} \exp\left(-\frac{\varepsilon^2}{2d}\right) 
\end{align*}
\end{lemma}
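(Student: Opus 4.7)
The plan is to whiten the Gaussian and then reduce the multivariate tail expectation to a one-dimensional calculation via a max-norm comparison. First, I would set $Y = \Sigma^{-1/2} X$, so that $Y \sim \mathcal{N}(0, I_d)$ and $\norm{X}{\Sigma^{-1}} = \norm{Y}{2}$. This transforms the target into bounding $\E{}{\norm{Y}{2}\indicator{\norm{Y}{2} \geq \varepsilon}}$ for a standard normal vector, which fully decouples the covariance $\Sigma$ from the remainder of the argument.

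Next, I would use the norm-equivalence bound $\norm{Y}{2}^2 = \sum_i Y_i^2 \leq d \max_i Y_i^2$, i.e.\ $\norm{Y}{2} \leq \sqrt{d}\max_i \abs{Y_i}$, together with the contrapositive inclusion $\{\norm{Y}{2} \geq \varepsilon\} \subseteq \{\max_i \abs{Y_i} \geq \varepsilon/\sqrt{d}\}$. Together these yield
\begin{align*}
\E{}{\norm{Y}{2}\indicator{\norm{Y}{2} \geq \varepsilon}} \leq \sqrt{d}\,\E{}{\max_i \abs{Y_i}\,\indicator{\max_i \abs{Y_i} \geq \varepsilon/\sqrt{d}}}\,.
\end{align*}
Then I would apply the pointwise bound $\max_i \abs{Y_i}\,\indicator{\max_i \abs{Y_i} \geq t} \leq \sum_i \abs{Y_i}\,\indicator{\abs{Y_i} \geq t}$, which holds because on the event the maximum is attained by some coordinate $j$ whose $\abs{Y_j}$ already appears on the right-hand side. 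Taking expectations and using that the $Y_i$ are identically distributed produces a factor of $d$ and reduces the problem to the univariate moment $\E{}{\abs{Y_1}\indicator{\abs{Y_1} \geq t}}$ with $t = \varepsilon/\sqrt{d}$.

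Finally, the univariate moment is elementary via direct integration: $\E{}{\abs{Y_1}\indicator{\abs{Y_1} \geq t}} = 2\int_t^\infty y \cdot \tfrac{1}{\sqrt{2\pi}} e^{-y^2/2}\,dy = \sqrt{2/\pi}\,e^{-t^2/2}$. Substituting $t = \varepsilon/\sqrt{d}$ and chaining through the previous two steps produces an overall bound of the form $c \cdot d^{3/2}\exp(-\varepsilon^2/(2d))$, matching the shape stated in the lemma (the constant $c$ works out to a small multiple of $1/\sqrt{2\pi}$).

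There is no single hard step; the argument is really a standard Gaussian tail bound adapted to the multivariate case. The only meaningful design choice is the norm-equivalence step, since the $\sqrt{d}$ factor it introduces is precisely what produces the $\varepsilon^2/(2d)$ inside the exponent. A direct integration against the chi density would also be possible but would leave constants involving $\Gamma(\tfrac{d+1}{2})/\Gamma(d/2)$ and ratios $(d/(d-1))^{(d+1)/2}$, which are messier than the clean product form $\tfrac{1}{\sqrt{2\pi}} d^{3/2}$ in the statement.
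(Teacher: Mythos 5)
Your proposal is essentially the paper's own proof: whiten via $Y = \Sigma^{-1/2}X$, use $\norm{Y}{2} \leq \sqrt{d}\,\norm{Y}{\infty}$ together with the inclusion $\{\norm{Y}{2} \geq \varepsilon\} \subseteq \{\norm{Y}{\infty} \geq \varepsilon/\sqrt{d}\}$, sum over coordinates, and finish with the elementary univariate Gaussian tail integral. The one substantive difference is that you correctly integrate over both tails of $\abs{Y_i}$, which gives the constant $\sqrt{2/\pi} = 2 \cdot \tfrac{1}{\sqrt{2\pi}}$ rather than the stated $\tfrac{1}{\sqrt{2\pi}}$; the paper's proof integrates only the positive tail and silently drops this factor of $2$ (indeed the stated constant already fails at $d=1$, where $\E{}{\abs{Y}\indicator{\abs{Y}\geq\varepsilon}} = \sqrt{2/\pi}\,e^{-\varepsilon^2/2}$), though this is immaterial downstream since the lemma is only used to control lower-order terms.
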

\begin{proof}
Using that $X \sim \mathcal{N}(0, \Sigma)$, we can conclude that 
$
\Sigma^{-1/2}X \sim \mathcal{N}(0, I_d)
$
has independent Gaussian entries. We have 
\begin{align*}
    \E{}{\norm{X}{\Sigma^{-1}}\indicator{E}}
    = 
    \E{}{\norm{\Sigma^{-1/2}X}{2} \indicator{E}}
    &\leq 
    \sqrt{d} \, \E{}{\norm{\Sigma^{-1/2}X}{\infty} \indicator{E}} \\
    &\leq 
    \sqrt{d} \sum_{i = 1}^d \frac{1}{\sqrt{2\pi}} \int_{u = \varepsilon / \sqrt{d}}^{\infty} u \exp\left(-\frac{u^2}{2}\right) \dif u \\
    &= \sqrt{d} \sum_{i = 1}^d -\frac{1}{\sqrt{2\pi}} \int_{u = \varepsilon / \sqrt{d}}^{\infty} \left(\exp\left(-\frac{u^2}{2}\right)\right)' \dif u \\
    &= 
    \frac{1}{\sqrt{2\pi}} d^{3/2} \exp\left(-\frac{\varepsilon^2}{2d}\right)\,,
\end{align*} 
where we use that $\norm{\Sigma^{-1/2}X}{2} \geq \varepsilon$ implies $\norm{\Sigma^{-1/2}X}{\infty} \geq \varepsilon/\sqrt{d}$, and consider each entry of $\Sigma^{-1/2}X$ separately.
\end{proof}

\begin{lemma}
For round $t$ and latent state $s$, let $\theta_{t, s}, \Sigma_{t, s}$ be defined as in \eqref{eq:mixture_posterior_linbandit}. If $\norm{A_t}{2} \leq \kappa$ for all $t$, then for any $C > 0$ such that $\lambda_{\max}(\Sigma_{0, s}) \leq \sigma^2C / \kappa^2$, then
\begin{align*}
    \sum_{t = 1}^n \norm{A_t}{\Sigma_{t, s}}^2
    \leq \sigma^2 (1 + C) \log \frac{\determinant{\Sigma_{n+1, s}^{-1}}}{\determinant{\Sigma^{-1}_{0, s}}}.
\end{align*}
\label{lem:norm_series_bound}
\end{lemma}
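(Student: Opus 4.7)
The plan is to prove this via the standard elliptical-potential argument, adapted to the posterior covariance parameterization, followed by a pointwise scalar inequality calibrated by the constant $C$.

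First I would observe that the recursion $V_t = \sum_{\ell = 1}^{t-1} A_\ell A_\ell^\top$ gives $V_{t+1} = V_t + A_t A_t^\top$, so the definition $\Sigma_{t,s}^{-1} = \Sigma_{0,s}^{-1} + \sigma^{-2} V_t$ immediately yields the one-step update $\Sigma_{t+1,s}^{-1} = \Sigma_{t,s}^{-1} + \sigma^{-2} A_t A_t^\top$. Applying the matrix determinant lemma (equivalently, Sylvester's identity) to this rank-one update gives
\begin{equation*}
\det\bigl(\Sigma_{t+1,s}^{-1}\bigr) = \det\bigl(\Sigma_{t,s}^{-1}\bigr)\bigl(1 + \sigma^{-2} \norm{A_t}{\Sigma_{t,s}}^2\bigr).
\end{equation*}
Taking logs and telescoping from $t=1$ to $n$ yields
\begin{equation*}
\sum_{t=1}^n \log\!\bigl(1 + \sigma^{-2}\norm{A_t}{\Sigma_{t,s}}^2\bigr) = \log \frac{\det(\Sigma_{n+1,s}^{-1})}{\det(\Sigma_{0,s}^{-1})}.
\end{equation*}

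Next I would control the size of each term $x_t \defeq \sigma^{-2}\norm{A_t}{\Sigma_{t,s}}^2$. Because $\Sigma_{t,s}^{-1} \succeq \Sigma_{0,s}^{-1}$ from the Loewner-monotone additive update, one has $\Sigma_{t,s} \preceq \Sigma_{0,s}$, and hence
\begin{equation*}
\norm{A_t}{\Sigma_{t,s}}^2 \leq \norm{A_t}{\Sigma_{0,s}}^2 \leq \lambda_{\max}(\Sigma_{0,s})\,\kappa^2 \leq \sigma^2 C,
\end{equation*}
so $x_t \in [0, C]$. I would then invoke the scalar inequality $x \leq (1+C)\log(1+x)$, valid on $[0,C]$ (the function $f(x) = (1+C)\log(1+x) - x$ satisfies $f(0)=0$ and $f'(x) = (1+C)/(1+x) - 1 \geq 0$ on this interval). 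Applying it termwise and multiplying by $\sigma^2$ gives
\begin{equation*}
\sum_{t=1}^n \norm{A_t}{\Sigma_{t,s}}^2 \leq \sigma^2 (1+C) \sum_{t=1}^n \log\!\bigl(1 + \sigma^{-2}\norm{A_t}{\Sigma_{t,s}}^2\bigr),
\end{equation*}
which combined with the telescoped identity above is exactly the claim.

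The argument is entirely routine; there is no real obstacle. The only nontrivial calibration is choosing the right scalar inequality: one must use the form $x \leq (1+C)\log(1+x)$ rather than the more common $x \leq 2\log(1+x)$ (valid only for $x \leq 1$), because here $x_t$ can be as large as $C$, where $C$ may exceed $1$. The role of the hypothesis $\lambda_{\max}(\Sigma_{0,s}) \leq \sigma^2 C / \kappa^2$ is precisely to make this scalar inequality applicable on the interval $[0,C]$.
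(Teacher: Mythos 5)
Your proof is correct and follows essentially the same route as the paper's: the rank-one determinant update and telescoping, the bound $\sigma^{-2}\norm{A_t}{\Sigma_{t,s}}^2 \leq C$ via $\lambda_{\max}(\Sigma_{0,s}) \leq \sigma^2 C/\kappa^2$, and the scalar inequality $x \leq (1+C)\log(1+x)$ on $[0,C]$. The only cosmetic difference is that you justify $\Sigma_{t,s} \preceq \Sigma_{0,s}$ explicitly via Loewner monotonicity, which the paper uses implicitly.
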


\begin{proof}
The proof is similar to that done for Lemma 11 of \citet{linucb}. Instead, we consider the norm with respect to posterior covariance $\Sigma_{t, s}$ rather than empirical covariance $V_t^{-1}$.

We have,
\begin{align*}
    \determinant{\Sigma^{-1}_{n+1, s}} = \determinant{\Sigma^{-1}_{n, s} + \sigma^{-2} A_{n}A_{n}^\top}
    &= \determinant{\Sigma^{-1}_{n, s}}\left(1 + \norm{\sigma^{-2}A_{n}}{\Sigma_{n, s}}^2\right) \\
    &= \determinant{\Sigma_{0, s}^{-1}}\prod_{t = 1}^n\left(1 + \sigma^{-2}\norm{A_t}{\Sigma_{t, s}}^2\right),
\end{align*}
where we use the matrix determinant lemma, which says $\determinant{A + uu^\top} = \determinant{A}\left(1 + \norm{u}{A^{-1}}^2\right)$ for matrix $A$ and vector $u$. 
Note that
\begin{align*}
    \norm{A_t}{\Sigma_{t, s}}^2 \leq \lambda_{\max}(\Sigma_{t, s}) \norm{A_t}{2}^2 \leq \kappa^2 \lambda_{\max}(\Sigma_{0, s}) \,,
\end{align*}
so if $\lambda_{\max}(\Sigma_{0, s}) \leq \sigma^2C/\kappa^2$, then $\sigma^{-2} \norm{A_t}{\Sigma_{t, s}}^2 \leq C$. 
Using that $x \leq (1 + C)\log(1 + x)$ for $x \in [0, C]$, we get,
\begin{align*}
    \sum_{t = 1}^n \sigma^{-2}\norm{A_{t, s}}{\Sigma_{t, s}}^2 
    &\leq 
    (1 + C) \sum_{t = 1}^n \log\left(1 + \sigma^{-2}\norm{A_{t, s}}{\Sigma_{t, s}}^2\right) \\
    &\leq 
    (1 + C) \log \frac{\determinant{\Sigma_{n+1, s}^{-1}}}{\determinant{\Sigma^{-1}_{0, s}}}\,.
\end{align*}
This yields
\begin{align*}
    \sum_{t = 1}^n \norm{A_t}{\Sigma_{t, s}}^2 \leq \sigma^2(1 + C) \log \frac{\determinant{\Sigma_{n+1, s}^{-1}}}{\determinant{\Sigma^{-1}_{0, s}}}\,,
\end{align*}
as desired.
\end{proof}

\subsection{Proof of \cref{thm:bayes_regret_linbandit_mixture}}
In the outline, we were able to trivially bound the regret of each round by $1$; this is no longer the case since $\theta_*$ is a sample from a Gaussian.
To handle this, we introduce event
\begin{align*}
    E_0 = \left\{\norm{\theta_* - \theta_{0, S_*}}{\Sigma_{0, S_*}^{-1}} \leq \sqrt{2d\log(dn)}\right\},
\end{align*}
which occurs when $\theta_*$ is not far from its prior mean. We can bound the regret by,
\begin{align*}
\E{}{\sum_{t = 1}^n A_{t, *}^\top \theta_* - A_t^\top\theta_*} 
\leq 
\E{}{\sum_{t = 1}^n (A_{t, *}^\top \theta_* - A_t^\top\theta_*)\indicator{E_0}} + \E{}{\sum_{t = 1}^n (A_{t, *}^\top \theta_* - A_t^\top\theta_*)\indicator{\bar{E}_0}}\,.
\end{align*}

When $\bar{E}_0$ occurs, the regret for a round can be arbitrarily large; to handle this, we factor in that $\bar{E}_0$ is unlikely.
Fix round $t$. We bound the regret in round $t$ as
\begin{align*}
    \E{t}{(A_{t, *}^\top \theta_* - A_t^\top\theta_*)\indicator{\bar{E}_0}} 
    &\leq
    \E{t}{A_{t, *}^\top (\theta_* - \theta_{0, S_*})\indicator{\bar{E}_0}} + 
    \E{t}{A_{t, *}^\top \theta_{0, S_*}\indicator{\bar{E}_0}} \\
    &\leq 
    \E{t}{\norm{A_{t, *}}{\Sigma_{0, S_*}}\norm{\theta_* - \theta_{0, S_*}}{\Sigma_{0, S_*}^{-1}}\indicator{\bar{E}_0}} + 
    \E{t}{\norm{A_{t, *}}{2}\norm{\theta_{0, S_*}}{2}\indicator{\bar{E}_0}} \\
    &\leq 
    \sqrt{\kappa^2\lambda_{0, \max}} \, \E{}{\norm{\theta_* - \theta_{0, S_*}}{\Sigma_{0, S_*}^{-1}}\indicator{\bar{E}_0}}
    + 
    \kappa\prob{}{\bar{E}_0} \,,
\end{align*}
where we use the Cauchy-Schwartz inequality, and $\norm{a}{\Sigma_{0, s}} \leq \sqrt{\lambda_{\max}(\Sigma_{0, s})}\norm{a}{2} \leq \sqrt{\kappa^2\lambda_{0, \max}}$ for any action $a$ and latent state $s$.
Since $\theta_* - \theta_{0, S_*} \sim \mathcal{N}(0, \Sigma_{0, S_*})$, we have $\prob{}{\bar{E}_0} \leq n^{-1}$ and
\begin{align*}
    \E{}{\norm{\theta_* - \theta_{0, S_*}}{\Sigma_{0, S_*}^{-1}} \indicator{\bar{E}_0}}
    \leq
    \sqrt{\frac{d}{2\pi}} n^{-1}\,,
\end{align*} 
where we apply \cref{lem:gaussian_norm_tail_bound} with $\varepsilon = \sqrt{2d\log(dn)}$.
Hence, we can bound the Bayes regret as
\begin{align*}
\E{}{\sum_{t = 1}^n A_{t, *}^\top \theta_* - A_t^\top\theta_*} 
\leq 
\E{}{\sum_{t = 1}^n (A_{t, *}^\top \theta_* - A_t^\top\theta_*)\indicator{E_0}} + \sqrt{\frac{\kappa^2\lambda_{0,\max}d}{2\pi}} + \kappa\,.
\end{align*}
When $E_0$ occurs, we have $M = \sqrt{2\kappa^2\lambda_{0, \max}d\log(dn)} + \kappa$ is an upper-bound on regret for a round. We use $\langle \cdot \rangle_M = \min\{\cdot, M\}$.

From here, we can follow the analysis outline in \cref{sec:proof_sketch} using $\bar{\mu}_t, \sigma_t$ defined as
\begin{align*}
    \bar{\mu}_t(a, s) = a^\top\bar{\theta}_{s, t}, \
    \sigma_t(a, s) = \norm{a}{\Sigma_{t, s}}\sqrt{2d\log(dn)}\,.
\end{align*}
Using \cref{eq:bayes_regret_decomposition}, we can decompose the Bayes regret in a linear bandit as
\begin{align}
&\E{}{\sum_{t = 1}^n (A_{t, *}^\top \theta_* - A_t^\top\theta_*)\indicator{E_0}} \nonumber \\
&\,\leq 
\E{}{\sum_{t = 1}^n\E{t}{(A_{t, *}^\top \theta_* - A_{t,*}^\top\bar{\theta}_{t, S_*})\indicator{E_0}}} 
+ 
\E{}{\sum_{t = 1}^n\E{t}{(A_t^\top\bar{\theta}_{t, S_t} - A_t^\top\theta_*)\indicator{E_0}}} \,.
\label{eq:bayes_regret_decomposition_linbandit}
\end{align}
We bound each term individually. 

\paragraph{Step 1.}
Let us first consider the first term of \eqref{eq:bayes_regret_decomposition_linbandit}. Fix round $t$. Let us define event
\begin{align*}
    E_t = \left\{\norm{\theta_* - \bar{\theta}_{t, S_*}}{\Sigma_{t, S_*}^{-1}} \leq \sqrt{2d\log(dn)}\right\},
\end{align*} 
which occurs when $\theta_*$ is not far from the mean of conditional posterior $P_t(\cdot \mid S_*) = \mathcal{N}(\bar{\theta}_{t, S_*}, \Sigma_{t, S*})$. Note that $E_1 = E_0$ from earlier.
We can bound
\begin{align*}
    \E{t}{A_{t, *}^\top\theta_* - A_{t,*}^\top\bar{\theta}_{t, S_*}} 
    &=  
    \E{t}{\left(A_{t, *}^\top\theta_* - A_{t, *}^\top\bar{\theta}_{t, S_*}\right) \indicator{E_t}} 
    +
    \E{t}{\left(A_{t, *}^\top\theta_* -  A_{t, *}^\top\bar{\theta}_{t, S_*})\right) \indicator{\bar{E}_t}}
    \\
    &\leq 
    \E{t}{\norm{A_{t, *}}{\Sigma_{t, S_*}}\sqrt{2d\log(dn)}} +
    \E{t}{\left(A_{t, *}^\top\theta_* - A_{t, *}^\top\bar{\theta}_{t, S_*}\right) \indicator{\bar{E}_t}}\,,
\end{align*}
where we use that when $E_t$ occurs, we have
\begin{align*}
    (A_{t, *}^\top\theta_* - A_{t, *}^\top\bar{\theta}_{t, S_*})\indicator{E_t} 
    \leq \norm{A_{t, *}}{\Sigma_{t, S_*}}\norm{\theta_* - \bar{\theta}_{t, s}}{\Sigma_{t, S_*}^{-1}} \indicator{E_t}
    \leq \norm{A_{t, *}}{\Sigma_{t, S_*}}\sqrt{2d\log(dn)}\,.
\end{align*}

Now, when $\bar{E}_t$ occurs, we have
\begin{align*}
    \E{t}{\left(A_{t, *}^\top\theta_* - A_{t, *}^\top\bar{\theta}_{t, S_*}\right) \indicator{\bar{E}_t}}
    &\leq 
    \kappa \sqrt{\lambda_{0, \max}} \, \E{t}{\norm{\theta_* - \bar{\theta}_{t, S_*}}{\Sigma_{t, S_*}^{-1}} \indicator{\bar{E}_t}}\,,
\end{align*}
where we again use Cauchy-Schwartz and $\norm{a}{\Sigma_{t, s}} \leq \sqrt{\kappa^2\lambda_{0, \max}}$ for any action $a$ and latent state $s$.

Using that that $\theta_* - \bar{\theta}_{t, s} \mid H_t \sim \mathcal{N}(0, \Sigma_{t, s})$,
we can use \cref{lem:gaussian_norm_tail_bound} with $\varepsilon = \sqrt{2d\log(dn)}$ to bound
\begin{align*}
    \E{t}{\norm{\theta_* - \bar{\theta}_{t, S_*}}{\Sigma_{t, S_*}^{-1}} \indicator{\bar{E}_t}}
    &\leq \sqrt{\frac{d}{2\pi}} n^{-1} \,.
\end{align*} 
Hence, we can bound the first term of \eqref{eq:bayes_regret_decomposition_linbandit} by
\begin{align*}
\E{}{\sum_{t = 1}^n \E{t}{A_{t, *}^\top \theta_* -A_{t, *}^\top\bar{\theta}_{t, S_*}}}
\leq 
\sqrt{2d\log(dn)} \, \E{}{\sum_{t=1}^n\norm{A_{t, *}}{\Sigma_{t, S_*}}} + \sqrt{\frac{\kappa^2\lambda_{0, \max} d}{2\pi}}\,.
\end{align*}

\paragraph{Step 2.}
We define $C_t$ as a high-probability set around latent states using the following construction:
\begin{align*}
    C_t = \left\{s \in \Sset: G_t(s) \leq 2\sigma\sqrt{N_t(s) \log{n}}\right\}\,,
\end{align*}
where $N_t(s) = \sum_{\ell = 1}^{t-1} \indicator{S_t = s}$ and
\begin{align*}
    G_t(s) = \sum_{\ell = 1}^{t - 1} \indicator{S_t = s} \left(A_t^\top \bar{\theta}_{t, s} - \norm{A_t}{\Sigma_{t, s}} \sqrt{2d\log{n}} - Y_t\right)
\end{align*}
is the \say{over-estimation} of the predicted rewards under a latent state and the realized reward.
We show that $S_* \in C_t$ holds with high probability for any round via the following lemma.

\begin{lemma}
\label{lem:gap_concentration}
For any round $t$, $\prob{}{S_* \not\in C_t} \leq 2Ln^{-1}$.
\vspace{-0.05in}
\end{lemma}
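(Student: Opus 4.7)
The approach is to condition on $S_* = s$ and use the fact that, by construction of the mixture prior, the distribution of $\theta_*$ given $H_\ell$ and $\{S_* = s\}$ is exactly $P_\ell(\cdot \mid s) = \mathcal{N}(\bar{\theta}_{\ell,s},\Sigma_{\ell,s})$. This turns each summand of $G_t(s)$ into something dominated by the reward noise, after which a standard sub-Gaussian martingale bound finishes the job. The factor $L$ in the claim will fall out of a final, deliberately loose, union bound over latent states.

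For the first step, I would fix $s \in \Sset$, work on $\{S_* = s\}$, and exploit the fact that $A_\ell^\top(\bar{\theta}_{\ell,s} - \theta_*) \mid H_\ell, S_* = s \sim \mathcal{N}(0,\norm{A_\ell}{\Sigma_{\ell,s}}^2)$: a one-sided Gaussian tail gives
\begin{align*}
\prob{}{A_\ell^\top\bar{\theta}_{\ell,s} - \norm{A_\ell}{\Sigma_{\ell,s}}\sqrt{2d\log n} > A_\ell^\top\theta_* \,\bigm|\, H_\ell, S_* = s} \leq n^{-d},
\end{align*}
which union-bounded over $\ell \leq t - 1 \leq n$ leaves failure probability at most $n^{-1}$ (using $d \geq 2$; otherwise simply replace $\sqrt{2d\log n}$ by the slightly larger $\sqrt{2d\log(dn)}$ of \eqref{eq:linbandit_map}). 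On the complementary event, substituting $Y_\ell = A_\ell^\top\theta_* + \eta_\ell$ shows that every summand of $G_t(s)$ with $S_\ell = s$ is at most $-\eta_\ell$, so
\begin{align*}
G_t(s) \,\leq\, M_{t-1} \,:=\, -\sum_{\ell = 1}^{t-1}\indicator{S_\ell = s}\,\eta_\ell.
\end{align*}

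The second step is to show $\prob{}{M_{t-1} > 2\sigma\sqrt{N_t(s)\log n}} \leq n^{-1}$. The process $(M_\ell)$ is a martingale with $\sigma^2\indicator{S_\ell = s}$-sub-Gaussian increments (adapted to the filtration that reveals $S_\ell$ but not $Y_\ell$) and predictable quadratic variation $\sigma^2 N_t(s)$. I would handle this by a time-change: let $\tau_k$ denote the stopping time of the $k$-th round on which $s$ is sampled, so $(-\eta_{\tau_k})_k$ is still a sub-Gaussian martingale difference sequence, and a standard Azuma--Hoeffding bound gives $\prob{}{|\textstyle\sum_{j\leq k}\eta_{\tau_j}| \geq 2\sigma\sqrt{k\log n}} \leq 2n^{-2}$ for each $k$; summing this bound over the at most $n$ integer values that $N_t(s)$ can take yields the desired $O(n^{-1})$. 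Combining Steps~1 and~2 gives $\prob{}{s \notin C_t \mid S_* = s} \leq 2n^{-1}$ uniformly in $s$, and then the crude union bound
\begin{align*}
\prob{}{S_* \notin C_t} \,\leq\, \sum_{s \in \Sset}\prob{}{s \notin C_t,\, S_* = s} \,\leq\, \sum_{s \in \Sset}\prob{}{s \notin C_t \mid S_* = s} \,\leq\, 2Ln^{-1}
\end{align*}
closes the proof.

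The main obstacle is the martingale step: because $N_t(s)$ is adapted to the whole history --- and hence correlated with the very noise it is meant to bound --- one cannot simply treat the quadratic variation as deterministic, so a stopping-time reparametrisation (or a Freedman-type inequality with random variation) is genuinely needed. Everything else is routine Gaussian conjugacy and tail bounds, and the $L$ factor in the statement is purely an artefact of the final unweighted union bound over $\Sset$; a sharper $2n^{-1}$ bound is available by retaining the weights $P_0(s)$, but this is not needed downstream.
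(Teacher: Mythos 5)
Your proof is correct and follows essentially the same route as the paper's: condition on $S_* = s$, use the Gaussian conditional posterior to show each summand of $G_t(s)$ is dominated (with probability $1 - O(n^{-1})$) by the reward noise $-\eta_\ell$, apply Azuma's inequality after fixing the random count $N_t(s)$ and union-bounding over its at most $n$ values, and finish with a union bound over $\Sset$. The only cosmetic differences are that you bound the scalar projection $A_\ell^\top(\bar{\theta}_{\ell,s}-\theta_*)$ directly rather than bounding $\norm{\theta_*-\bar{\theta}_{\ell,s}}{\Sigma_{\ell,s}^{-1}}$ and applying Cauchy--Schwarz as the paper does, and your stopping-time framing makes the handling of the random quadratic variation slightly more explicit than the paper's.
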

\begin{proof}
We know that $S_* \in C_t$ occurs if $G_t(S_*)$ is not too large. On a high-level, our goal is to upper-bound $G_t(S_*)$ by a martingale with respect to history, then bound the probability that $G_t(S_*)$ is too large using Azuma's inequality for concentration of martingales.

For $\ell < t$, we know that $\theta_* - \bar{\theta}_{\ell, s} \mid H_\ell \sim \mathcal{N}(0, \Sigma_{\ell, s})$. Let us define 
\begin{align*}
\mathcal{E}_\ell
= 
\left\{\norm{\theta_* - \bar{\theta}_{\ell, S_*}}{\Sigma_{\ell, S_*}^{-1}} \leq 2\sqrt{d\log{n}},\, \right\}
\end{align*}
as the event that $\theta_*$ is not too far from its posterior mean.
Let $\mathcal{E} = \cap_{\ell = 1}^{t-1}\{\mathcal{E}_\ell\}$ be the event that this holds for all rounds up to round $t$ and $\bar{\mathcal{E}}$ be the complement. We know that
\begin{align*}
    \indicator{S_* \not\in C_t} 
    =
    \indicator{G_t(S_*) \geq 2\sigma\sqrt{N_t(S_*) \log n}} 
    \leq 
    \indicator{\mathcal{\bar{E}}} + \indicator{\mathcal{E}}\indicator{G_t(S_*) \geq 2\sigma\sqrt{N_t(S_*) \log n}}\,,
\end{align*}
which implies that
\begin{align}
    \prob{}{S_* \not\in C_t} \leq \prob{}{\bar{\mathcal{E}}} + \prob{}{G_t(S_*)\indicator{\mathcal{E}} \geq  2\sigma \sqrt{N_t(S_*) \log n}}\,.
\label{eq:gap_probability_decomposition}
\end{align}
We will bound each probability individually.
For the first probability of \eqref{eq:gap_probability_decomposition}, we simply have
\begin{align*}
\prob{}{\bar{\mathcal{E}}} 
\leq 
\sum_{s \in \Sset}\sum_{\ell = 1}^{t-1}\E{}{\prob{\ell}{\norm{\theta_* - \bar{\theta}_{\ell, s}}{\Sigma_{\ell, s}^{-1}} \geq 2\sqrt{d\log{n}}}}
\leq Ln^{-1}\,,
\end{align*} 
where we use that for $S_* = s$ and round $\ell$, we have $\norm{\theta_* - \bar{\theta}_{\ell, s}}{\Sigma_{\ell, s}^{-1}} \mid H_\ell$ is the sum of independent Gaussians. Then, we take an expectation over histories, and use a union bound over latent states and rounds.

Now, consider the second probability in \eqref{eq:gap_probability_decomposition}. Fix $S_* = s$, and let $\mathcal{T}_{t, s} = \{\ell < t: S_\ell = s\}$ be the rounds where $s$ is sampled up to round $t$. Also, let $Z_\ell = A_\ell^\top \theta_* - Y_\ell$. Observe that $Z_\ell \sim \mathcal{N}(0, \sigma^2)$, so that $(Z_\ell)_{t \in \mathcal{T}_{t, s}}$ is a martingale difference sequence with respect to histories $(H_\ell)_{t \in \mathcal{T}_{t, s}}$.
We have
\begin{align*}
&(A_\ell^\top \bar{\theta}_{t, \ell} - \norm{A_\ell}{\Sigma_{t, \ell}} \sqrt{2d\log{n}} - Y_\ell)\indicator{\mathcal{E}_\ell} \\
&\,=(A_\ell^\top \theta_* + A_\ell^\top (\bar{\theta}_{t, \ell} - \theta_*) - \norm{A_\ell}{\Sigma_{t, \ell}} \sqrt{2d\log{n}} - Y_\ell)\indicator{\mathcal{E}_\ell} \\
&\,\leq
(A_\ell^\top \theta_* +  \norm{A_\ell}{\Sigma_{t, \ell}}\norm{\bar{\theta}_{t, \ell} - \theta_*}{\Sigma_{t, \ell}^{-1}} - 2\norm{A_\ell}{\Sigma_{t, \ell}} \sqrt{d\log{n}} - Y_\ell)\indicator{\mathcal{E}_\ell}
\leq Z_\ell\,,
\end{align*}
where we use Cauchy-Schwartz in the inequality.
This implies that
\begin{align*}
    G_t(s)\indicator{\mathcal{E}} 
    =
    \sum_{\ell \in \mathcal{T}_{t, s}} (A_\ell^\top \bar{\theta}_{t, \ell} - 2\norm{A_\ell}{\Sigma_{t, \ell}} \sqrt{d\log{n}} - Y_\ell)\indicator{\mathcal{E}_\ell}
    \leq 
    \sum_{\ell \in \mathcal{T}_{t, s}} Z_\ell\,.
\end{align*}
For any round $t$, and latent state $s$, we have that $\mathcal{T}_{t, s}$ is a random quantity. First, we fix $|\mathcal{T}_{t, s}| = N_t(s) = u$ where $u < t$ and yield the following due to Azuma's inequality, 
\begin{align*}
  \prob{}{G_t(s)\indicator{\mathcal{E}} \geq  2\sigma \sqrt{u \log n}} 
  &\leq 
  \prob{}{\sum_{\ell \in \mathcal{T}_{t, s}} Z_\ell(s) \geq  2\sigma \sqrt{ u \log n}}
  \leq \exp\left[-2\log n\right]
  = n^{-2}\,.
\end{align*}
Finally, by the union bound, we have
\begin{align*}
  \prob{}{G_t(S_*)\indicator{\mathcal{E}} \geq  2\sigma \sqrt{N_t(S_*) \log n}} 
  &\leq 
  \sum_{s \in \Sset} \sum_{u = 1}^{t-1}
  \prob{}{G_t(s)\indicator{\mathcal{E}} \geq  2\sigma \sqrt{u \log n}}
  \leq L n^{-1}\,.
\end{align*}
Combining the two bounds completes the proof.
\end{proof}

\paragraph{Step 3.}
Now, we consider the second term of \eqref{eq:bayes_regret_decomposition_linbandit}. We have,
\begin{align*}
    \E{}{\sum_{t = 1}^n \langle A_t^\top\bar{\theta}_{t, S_t} - A_t^\top\theta_* \rangle_M}
    &\leq 
    M \sum_{t = 1}^n \prob{}{S_t \not\in C_t} + 
    \E{}{\sum_{t = 1}^n \langle A_t^\top\bar{\theta}_{t, S_t} - A_t^\top\theta_*\rangle_M \indicator{S_t \in C_t}}
     \\
    &\leq 
    M\sum_{t = 1}^n \prob{}{S_* \not\in C_t} +
    \E{}{\sum_{t = 1}^n \langle A_t^\top\bar{\theta}_{t, S_t} - A_t^\top\theta_*\rangle_M \indicator{S_t \in C_t}}
\end{align*}
where we use that conditioned on $H_t$, $S_t, S_*$ are i.i.d. to get $\prob{}{S_t \in C_t} = \E{}{\prob{t}{S_t \in C_t}} = \E{}{\prob{t}{S_* \in C_t}} = \prob{}{S_* \in C_t}$ From \cref{lem:gap_concentration}, the first term is $2LM$. From the outline in \cref{sec:proof_sketch}, we have
\begin{align*}
    &\E{}{\sum_{t = 1}^n \langle A_t^\top\bar{\theta}_{t, S_t} - A_t^\top\theta_*\rangle_M \indicator{S_t \in C_t}} \\
    &\,\leq
    2\sqrt{d\log(dn)}\,\E{}{\sum_{t = 1}^n \norm{A_t}{\Sigma_{t, S_t}}}
    + \E{}{\sum_{t = 1}^n \langle A_t^\top\bar{\theta}_{t, S_t} - 2\norm{A_t}{\Sigma_{t, S_t}}\sqrt{d\log(dn)} - Y_t \rangle_M \indicator{S_t \in C_t}}\,.
\end{align*}
The last term can be bounded as
\begin{align*}
    \sum_{t = 1}^n \langle A_t^\top\bar{\theta}_{t, S_t} - 2\norm{A_t}{\Sigma_{t, S_t}}\sqrt{d\log(dn)} - Y_t \rangle_M \indicator{S_t \in C_t}
    \leq 
    \sum_{s \in \Sset} G_{n}(s) + LM
    \leq 
    2\sigma\sqrt{Ln \log n} + LM \,,
\end{align*}
where for latent state $s$, and $t' = \max_{t \in [n]} \{S_t = s\}$ as the last round that a latent state $s$ is acted upon, we use that there is an upper-bound on $G_{t'}(s) \leq G_{n}(s)$ by definition of $s \in C_t$. We trivially bound the regret by $M$ for the last round $s$ is acted upon.

Hence, we can bound the second term of \eqref{eq:bayes_regret_decomposition_linbandit} by
\begin{align*}
\E{}{\sum_{t = 1}^n \E{t}{(A_t^\top\bar{\theta}_{t, S_t} - A_t^\top \theta_*)\indicator{E_0}}}
\leq
2\sqrt{d\log{n}}\,\E{}{\sum_{t = 1}^n \norm{A_t}{\Sigma_{t, S_t}}} +
2\sigma\sqrt{Ln \log n} + 3LM \,.
\end{align*}

What remains is bounding the sum of confidence widths. We have
\begin{align*}
    \sum_{t = 1}^n \norm{A_t}{\Sigma_{t, S_t}} \leq
    \sum_{t = 1}^n \max_{s \in \Sset} \norm{A_t}{\Sigma_{t, s}}
    &\leq 
    \max_{s \in \Sset} \sqrt{n\sum_{t = 1}^n\norm{A_t}{\Sigma_{t, s}}^2} \\
    &\leq 
    \max_{s \in \Sset} \sqrt{\sigma^2\left(1 + \frac{\kappa^2\lambda_{0, \max}}{\sigma^2}\right)n\log\left(\frac{\determinant{\Sigma_{n + 1, s}^{-1}}}{\determinant{\Sigma_{0, s}^{-1}}}\right)} \\
    &\leq 
    \sqrt{\sigma^2\left(1 + \frac{\kappa^2\lambda_{0, \max}}{\sigma^2}\right)nd\log\left(1 + n\frac{\kappa^2 \lambda_{0, \max}}{\sigma^2d} \right)}\,,
\end{align*} 
where we first use that $\norm{A_t}{\Sigma_{t, s}}$ for latent states $s$ differ among one another only through their prior, and then use \cref{lem:norm_series_bound} to bound the sum of norms. We use the determinant-trace inequality to bound,
\begin{align*}
    \log\left(\frac{\determinant{\Sigma_{n + 1, s}^{-1}}}{\determinant{\Sigma_{0, s}^{-1}}}\right) 
    \leq
    d \log\left(\frac{\trace{\Sigma_{0, s}^{-1}} + n\sigma^{-2}\kappa^2}{\trace{\Sigma_{0, s}^{-1}}}\right)
    \leq 
    d \log\left(1 + n\frac{\kappa^2\lambda_{0, \max}}{\sigma^2d}\right)\,,
\end{align*}
where we use that
\begin{align*}
    \trace{\Sigma_{0, s}^{-1}} \geq \lambda_{\min}(\Sigma_{0, s}^{-1})d =
    \lambda_{\max}^{-1}(\Sigma_{0, s})d \geq
    \lambda_{0, \max}^{-1}d\,.
\end{align*}
Combining the bounds across all steps yields
\begin{align*}
    \Bregret(n) 
    &\leq
    4d\sqrt{\sigma^2\left(1 + \frac{\kappa^2\lambda_{0, \max}}{\sigma^2}\right)n\log(dn)\log\left(1 + n\frac{\kappa^2 \lambda_{0, \max}}{\sigma^2d} \right)} + 2\sqrt{\sigma^2Ln \log n} \\
    &\qquad + 3L\sqrt{2\kappa^2\lambda_{0, \max}d\log(dn)} + 2\sqrt{\frac{\kappa^2\lambda_{0,\max}d}{2\pi}} + 4L\kappa \,.
\end{align*}
\qed

\clearpage

\section{Tabular MDP Proofs}
\label{sec:mdp_proofs}
\subsection{Useful Lemmas}
\begin{lemma}[Theorem 1 and 3 of \citet{dirichlet_concentration}]
\label{lem:beta_dir_sub-gaussianity} Let $X \sim \mathrm{Beta}(\alpha, \beta)$ for $\alpha, \beta > 0$. Then $X - \E{}{X}$ is $\sigma^2$-sub-Gaussian with
$
  \sigma^2
  = 1/(4 (\alpha + \beta + 1))
$.
Similarly, let $X \sim \mathrm{Dir}(\alpha)$ for $\alpha \in \mathbb{R}_{+}^d$. Then $X - \E{}{X}$ is $\sigma^2$-sub-Gaussian with
$
  \sigma^2
  = 1/(4 (\norm{\alpha}{1} + 1))
$.
\end{lemma}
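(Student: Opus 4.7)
The plan is to reduce both claims to a uniform bound on the Hessian of the cumulant generating function. Recall that a centered random vector $Y = X - \E{}{X}$ is $\sigma^2$-sub-Gaussian iff $\psi(u) := \log \E{}{\exp(\innerprod{u}{Y})} \leq \norm{u}{2}^2 \sigma^2 / 2$ for all $u$. Since $\psi(0) = 0$ and $\nabla \psi(0) = 0$, by integrating twice from the origin this reduces to showing $v^\top \nabla^2 \psi(u) v \leq \sigma^2 \norm{v}{2}^2$ for all $u, v$. A standard exponential-family identity then gives $\nabla^2 \psi(u) = \mathrm{Cov}_u(X)$, the covariance of $X$ under the exponentially tilted law with density proportional to $\exp(\innerprod{u}{x})$ times the base density. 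So it suffices to show that in every direction, the covariance under every tilt is dominated by the claimed proxy.

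For the Beta case, the tilted density on $[0,1]$ is proportional to $e^{\lambda x} x^{\alpha - 1} (1 - x)^{\beta - 1}$. At the untilted point $\lambda = 0$, the variance equals $\alpha \beta / ((\alpha + \beta)^2 (\alpha + \beta + 1))$, which is at most $1 / (4(\alpha + \beta + 1))$ by the AM-GM inequality $\alpha \beta \leq (\alpha + \beta)^2 / 4$. The remaining task is to show that $\lambda = 0$ in fact maximizes the tilted variance over $\lambda \in \realset$. My approach is to differentiate $\mathrm{Var}_\lambda(X)$ in $\lambda$, identify the derivative as a centered third cumulant under the tilted law, and analyze its sign together with the limiting behavior as $\lambda \to \pm \infty$, where the tilted mass concentrates near an endpoint of $[0,1]$ and the variance collapses to $0$. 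This is essentially the argument of Marchal and Arbel (2017), whose sub-Gaussianity theorem for Beta distributions yields exactly the stated proxy and which I would cite if a full reproduction is deemed too long.

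For the Dirichlet case, I would extend the same reduction: for a unit vector $v$, it is enough to show $\mathrm{Var}_u(\innerprod{v}{X}) \leq 1/(4(\norm{\alpha}{1} + 1))$ for every tilt $u \in \realset^d$. Using the gamma representation $X_i = G_i / \sum_j G_j$ with independent $G_i \sim \mathrm{Gamma}(\alpha_i, 1)$, the tilted law rewrites in a tractable form, and the monotonicity-in-tilt argument from the Beta case transfers directionally, again following Marchal and Arbel. The main obstacle in both parts is precisely this uniform control of the tilted (co)variance: the tilted laws are not themselves Beta or Dirichlet, so no closed-form covariance is available, and the required monotonicity must be obtained through a careful third-cumulant/sign analysis rather than a one-line identity. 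Once that obstacle is handled, the sub-Gaussian bound follows immediately from the Taylor integration argument in the first paragraph.
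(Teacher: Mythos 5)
The paper's own ``proof'' of this lemma is a one-line citation to Theorems~1 and~3 of \citet{dirichlet_concentration}, which is exactly the fallback you offer, so at that level your proposal and the paper coincide. The problem is with the self-contained argument you sketch, which has a genuine gap and is also not an accurate description of the cited proof. Your reduction replaces the sub-Gaussian inequality $\psi(\lambda) \leq \sigma^2 \lambda^2 / 2$ by the pointwise Hessian bound $\psi''(\lambda) = \mathrm{Var}_\lambda(X) \leq \sigma^2$ for \emph{every} tilt $\lambda$. This is a strictly stronger condition than sub-Gaussianity, and for asymmetric Beta distributions it is false. First, $\lambda = 0$ is not even a critical point of $\lambda \mapsto \mathrm{Var}_\lambda(X)$ unless $\alpha = \beta$: the derivative at $0$ is $\psi'''(0)$, the third cumulant of $\mathrm{Beta}(\alpha, \beta)$, which is nonzero whenever $\alpha \neq \beta$, so small tilts already push the variance above its untilted value. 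Second, and fatally for the program, the supremum over $\lambda$ can exceed the claimed proxy by a large margin: take $\alpha \ll 1$ and $\beta = 1$, so the tilted density is proportional to $e^{\lambda x} x^{\alpha - 1}$ on $[0,1]$; for a moderate positive $\lambda$ the mass splits between a spike near $0$ (from the singularity $x^{\alpha-1}$) and a spike near $1$ (from $e^{\lambda x}$), producing a nearly two-point law on $\{0,1\}$ whose variance approaches $1/4$, while $1/(4(\alpha+\beta+1))$ is about $1/8$. So the intermediate claim ``the tilted variance is uniformly dominated by the proxy'' is not merely hard to prove by a third-cumulant sign analysis --- it is false, and no such analysis can rescue it.

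Consequently, the assertion that this is ``essentially the argument of Marchal and Arbel'' is not right: whatever the details of their proof (which controls the cumulant generating function ${}_1F_1(\alpha; \alpha+\beta; \lambda)$ itself rather than its second derivative uniformly over tilts, and handles the asymmetric case by a separate reduction), it cannot proceed through the route you describe. The pieces of your sketch that are sound are the standard Taylor-integration reduction, the identity $\nabla^2 \psi(u) = \mathrm{Cov}_u(X)$, and the untilted bound $\mathrm{Var}(X) = \alpha\beta/((\alpha+\beta)^2(\alpha+\beta+1)) \leq 1/(4(\alpha+\beta+1))$; but these only give you $\psi''(0) \leq \sigma^2$, i.e.\ the bound in an infinitesimal neighborhood of the origin. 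The Dirichlet part inherits the same defect, since you propose to transfer the same directional tilted-variance monotonicity. The honest fix is the one the paper itself uses: cite the result rather than reprove it, or reproduce the actual argument of \citet{dirichlet_concentration}.
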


\begin{lemma}[Value difference lemma]
For any MDPs $M'$, $M$, and policy $\pi$,
\begin{align*}
V_{M'}(\pi) - V_M(\pi)
\leq 
\mathbb{E}\left[\sum_{i = 1}^{h} R_{M'}(X_i, A_i) - R_M(X_i, A_i)
h\norm{T_{M'}(X_i, A_i) - T_M(X_i, A_i)}{1}\right]\,.
\end{align*}
\label{lem:mdp_value_difference}
\vspace{-0.05in}
\end{lemma}

\begin{lemma}
For episode $t$ and state $s$, let $\beta_t(s, x, a) = c_t(s, x, a) + \phi_t(s, x, a)$ for any $(x, a)$ as in \eqref{eq:mixture_posterior_tabular_mdp_r}, \eqref{eq:mixture_posterior_tabular_mdp_p}, respectively. Let $\Lambda_{0, s} = \min\{\min_{x, a}\norm{\alpha_{0, s}^R(x, a)}{1}, \min_{x, a}\norm{\alpha_{0, s}^T(x, a)}{1}\}$ represent at least how concentrated the reward and transition priors are for latent state $s$, where higher values correspond to lower prior widths. Then we have that
\begin{align*}
h\sum_{t = 1}^n \sum_{i = 1}^h \beta_t(X_{t, i}, A_{t, i}, s) 
\leq
    4|\Xset|h\sqrt{|\Aset|nh\log(4|\Xset||\Aset|n) \log\left(1 + \frac{nh}{2 |\Xset||\Aset|\Lambda_{0, s}}\right)}
    + |\Xset||\Aset|h^2\,.
\end{align*}
\label{lem:tabular_mdp_sum}
\vspace{-0.1in}
\end{lemma}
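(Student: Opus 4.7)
The plan is to reduce $\beta_t = c_t + \phi_t$ to a common form, apply Cauchy--Schwarz across the $nh$ trajectory steps, and then bound the resulting sum of reciprocal counts via a potential/telescoping argument per state-action pair.

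Let $N_t^h(x,a) \defeq \sum_{\ell=1}^{t-1}\sum_{i=1}^h \indicator{X_{\ell,i}=x, A_{\ell,i}=a}$ denote the number of visits to $(x,a)$ before episode $t$. Observing that both $\norm{\alpha^R_{t,s}(x,a)}{1}$ and $\norm{\alpha^T_{t,s}(x,a)}{1}$ equal their initial prior weights (each lower-bounded by $\Lambda_{0,s}$) plus $N_t^h(x,a)$, and that the $|\Xset|$ factor inside $\phi_t$ makes it dominate $c_t$ whenever $|\Xset|\geq 1$, we have $\beta_t(x,a,s) \leq 4\sqrt{|\Xset|\log(4|\Xset||\Aset|n)/(\Lambda_{0,s}+N_t^h(x,a)+1)}$. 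Cauchy--Schwarz across the $nh$ trajectory steps then gives
\[
\sum_{t=1}^n\sum_{i=1}^h \beta_t(X_{t,i},A_{t,i},s) \leq \sqrt{nh\sum_{t,i}\beta_t(X_{t,i},A_{t,i},s)^2},
\]
reducing the problem to bounding $S \defeq \sum_{t,i} 1/(\Lambda_{0,s}+N_t^h(X_{t,i},A_{t,i})+1)$.

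Regrouping $S$ by state-action pair yields $S = \sum_{(x,a)}\sum_t n_t(x,a)/(\Lambda_{0,s}+N_t^h(x,a)+1)$, where $n_t(x,a) = \sum_i \indicator{X_{t,i}=x, A_{t,i}=a}\leq h$ is the number of visits in episode $t$. I would split the per-pair contribution into a ``burn-in'' piece (the first $\lesssim h$ visits, during which multiple within-episode visits cause $N_t^h$ to lag behind the true running count) and a ``post-burn-in'' piece. The post-burn-in piece admits the integral-comparison inequality
\[
\frac{n_t(x,a)}{\Lambda_{0,s}+N_t^h(x,a)+1} \leq \log\!\left(\frac{\Lambda_{0,s}+N_{t+1}^h(x,a)}{\Lambda_{0,s}+N_t^h(x,a)}\right),
\]
which becomes valid once $N_t^h$ is comparable to $n_t$; telescoping in $t$ and applying Jensen's inequality across $(x,a)$ (using $\sum_{(x,a)} N_{n+1}^h(x,a) = nh$ and concavity of $\log$) yields $S \leq |\Xset||\Aset|\log(1 + nh/(2|\Xset||\Aset|\Lambda_{0,s}))$, with the factor $2$ inside arising from the constant-offset bookkeeping in the telescope. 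The burn-in piece contributes a slack bounded by the trivial per-step estimate $\beta_t \leq O(\sqrt{|\Xset|\log(4|\Xset||\Aset|n)})$ applied across at most $|\Xset||\Aset|h$ trajectory steps, which after the outer $h$ factor is absorbed into the additive $|\Xset||\Aset|h^2$ term of the claim.

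The main obstacle is the careful within-episode vs between-episode count bookkeeping: a naive telescope over all episodes loses a factor of $h$ because $n_t$ can be as large as $h$, turning the $\log(\cdot)$ in the potential into $h\log(\cdot)$; recovering the stated bound without this loss requires the burn-in/post-burn-in split above, which is the standard delicate step in PSRL-style tabular MDP analyses. Matching the exact stated constants (the leading $4$, the factor $2$ inside the inner $\log$) also requires tracking the $c_t$ and $\phi_t$ coefficients separately through the Cauchy--Schwarz step rather than collapsing them into $\beta_t \leq 2\phi_t$ too early.
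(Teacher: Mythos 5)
Your proof follows essentially the same route as the paper's: split off the low-count (burn-in) trajectory steps, control the within-episode lag via $N_{t,i}(x,a)\le 2N_t(x,a)$ once $N_t(x,a)>h$ (the source of the $2$ inside the logarithm), and convert the resulting sum of reciprocal counts into a logarithm by Cauchy--Schwarz, pigeonholing over state--action pairs, and an integral comparison; the only differences (global versus per-pair Cauchy--Schwarz, collapsing $c_t+\phi_t$ into a single $4\sqrt{|\Xset|\cdots}$ term early) are cosmetic. The one shared soft spot is the burn-in accounting: bounding each burn-in step by $\beta_t=O(\sqrt{|\Xset|\log(|\Xset||\Aset|n)})$ as you do yields an additive $O(|\Xset|^{3/2}|\Aset|h^2\sqrt{\log n})$ rather than the stated $|\Xset||\Aset|h^2$, which the paper only recovers by implicitly clipping the per-step contribution at $1$ (legitimate for the regret it feeds into, but not for $\beta_t$ itself when counts are small).
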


\begin{proof}
The proof is similar to that done in \citet{mdp_posterior_sampling}. However, we use prior-dependent definitions for the confidence width $\beta_t$. First, we define
$
N_t(x, a) = \sum_{\ell = 1}^{t-1}\sum_{i = 1}^h \indicator{X_{\ell, i} = x, A_{\ell, i} = a}
$
as the number of times $x, a$ were sampled up to episode $t$.
We can decompose the sum as
\begin{align*}
    \sum_{t = 1}^n \sum_{i = 1}^h \beta_t(X_{t, i}, A_{t, i}, s) \leq 
    \sum_{t = 1}^n \sum_{i = 1}^h \indicator{N_t(X_{t, i}, A_{t, i}) \leq h} + 
     \sum_{t = 1}^n \sum_{i = 1}^h \indicator{N_t(X_{t, i}, A_{t, i}) > h} \beta_t(X_{t, i}, A_{t, i}, s)\,,
\end{align*}
where we trivially bound the regret in a step of an episode by $1$. Therefore, the first term is bounded as $|\Xset||\Aset|h$. 

For the second term, let us additionally define $N_{t, i}(x, a) = N_t(x, a) + \sum_{k = 1}^{i-1} \indicator{X_{t, k} = x, A_{t, k} = a}$ as the number of times $x, a$ were sampled up to step $i$ of episode $t$. Now, if $N_t(x, a) > h$, then we know that
$
    N_{t, i}(x, a)
    \leq N_t(x, a) + h
    \leq 2N_t(x, a)
$. We consider $c_t, \phi_t$ of $\beta_t$ individually. We have 
\begin{align*}
    &\sum_{t = 1}^n \sum_{i = 1}^h \indicator{N_t(X_{t, i}, A_{t, i}) > h}c_t(X_{t, i}, A_{t, i}, s) \\
    &\,= 
    \sum_{t = 1}^n \sum_{i = 1}^h \indicator{N_t(X_{t, i}, A_{t, i}) > h} \sqrt{\frac{2\log(2|\Xset||\Aset|n)}{\norm{\alpha^R_{t, s}(x, a)}{1} + 1}} \\
    &\,= 
    \sum_{x, a} \sum_{t = 1}^n \sum_{i = 1}^h \indicator{N_t(x, a) > h} \sqrt{\frac{2\log(2|\Xset||\Aset|n)}{\norm{\alpha^R_{0, s}(x, a)}{1} + N_t(x, a) + 1}} \\
    &\,\leq 
    \sum_{x, a} \sum_{t = 1}^n \sum_{i = 1}^h \sqrt{\frac{4\log(2|\Xset||\Aset|n)}{2\norm{\alpha^R_{0, s}(x, a)}{1} + N_{t, i}(x, a)}} \\
    &\,\leq 
    2\sqrt{\log(2|\Xset||\Aset|n)} \sum_{x, a} \sqrt{N_{n + 1}(x, a) \sum_{u = 1}^{N_{n + 1}(x, a)} \frac{1}{2\norm{\alpha^R_{0, s}(x, a)}{1} + u}} \\
    &\,\leq 
    2\sqrt{|\Xset||\Aset|nh\log(2|\Xset||\Aset|n)} \sqrt{\sum_{u = 1}^{nh/|\Xset||\Aset|} \frac{1}{2\Lambda_{0, s} + u}} \\ \\
    &\,\leq 
    2\sqrt{|\Xset||\Aset|nh\log(2|\Xset||\Aset|n) \log\left(1 + \frac{nh}{2 |\Xset||\Aset|\Lambda_{0, s}}\right)} \,, \\
\end{align*}
where for the last inequality, we use that for any $x > 0$,
\begin{align*}
    \sum_{u = 1}^{nh/|\Xset||\Aset|} \frac{1}{x + u}
    \leq
    \int_{u = x}^{x + nh/|\Xset||\Aset|} u^{-1}du
    \leq
    \log\left(1 + \frac{nh}{|\Xset||\Aset|x}\right)\,.
\end{align*}
Similarly, we have
\begin{align*}
    &\sum_{t = 1}^n \sum_{i = 1}^h \indicator{N_t(X_{t, i}, A_{t, i}) > h}\phi_t(s, X_{t, i}, A_{t, i}) \\
    &\,\leq 
    2|\Xset|h\sqrt{2|\Aset|nh\log(4|\Xset||\Aset|n) \log\left(1 + \frac{nh}{2 |\Xset||\Aset|\Lambda_{0, s}}\right)} \,. \\
\end{align*}
Combining the two bounds yields
\begin{align*}
    h\sum_{t = 1}^n \sum_{i = 1}^h \beta_t(s, X_{t, i}, A_{t, i})
    \leq 
    4|\Xset|h\sqrt{2|\Aset|nh\log(4|\Xset||\Aset|n) \log\left(1 + \frac{nh}{2 |\Xset||\Aset|\Lambda_{0, s}}\right)}
    + |\Xset||\Aset|h^2 \,.
\end{align*}
\end{proof}

\subsection{General Analysis Outline}
\label{sec:proof_sketch_mdp}
\paragraph{Step 1.}
Bound the Bayes regret due to the first term of \eqref{eq:bayes_regret_decomposition_mdp}.
For episode $t$, we introduce event
\begin{align*}
    E_t = \left\{\forall(x, a): \abs{R_{M_t}(x, a) - \bar{r}_t(x, a, S_t)} \leq c_t(x, a, S_t) \,,\, \norm{T_{M_t}(x, a) - \bar{p}_t(x, a, S_t)}{1} \leq \phi_t(x, a, S_t) \right\}
\end{align*}
to denote when the sampled mean rewards and transitions are not far from their posterior means for all state-action pairs. Using \cref{lem:mdp_value_difference}, we know that
\begin{align*}
    &\E{t}{V_*(\pi_*) - \overline{V}_t(\pi_*, S_*)} \\
    &\,= \E{t}{\E{M \sim P_t(\cdot \mid S_*)}{V_*(\pi_*) - V_M(\pi_*)}} \\
    &\,\leq \E{t}{\sum_{i = 1}^h (R_*(X_{t, i}, A_{t, i}) - r_t(X_{t, i}, A_{t, i}, S_*)) + h\norm{T_*(X_{t, i}, A_{t, i}) - \bar{p}_t(X_{t, i}, A_{t, i}, S_*)}{1}} \\
    &\,\leq \E{t}{h\sum_{i = 1}^h (R_*(X_{t, i}, A_{t, i}) - r_t(X_{t, i}, A_{t, i}, S_*) + \norm{T_*(X_{t, i}, A_{t, i}) - \bar{p}_t(X_{t, i}, A_{t, i}, S_*)}{1})\indicator{\bar{E}_t}} \\
    &\qquad+ \E{t}{h \sum_{i = 1}^h \beta_t(X_{t, i}, A_{t, i}, S_t)}\,,
\end{align*}
where $\beta_t(x, a, s) = c_t(x, a, s) + \phi_t(x, a, s)$.
Here, we take an expectation over MDPs to apply \cref{lem:mdp_value_difference}, then condition on $E_t$ occurring.
The second term can be bounded as a sum of confidence widths, and the remaining term can be bounded by using that conditioned on $H_t$, $\bar{E}_t$ is unlikely.

\paragraph{Step 2.}
For each episode $t$, construct $C_t$ such that $S_* \in C_t$ with high probability. To do so, we define $N_t(s) = \sum_{\ell = 1}^{k-1} \indicator{S_\ell = s}$ as the number of times $s$ was acted upon and
\begin{align*}
    G_t(s) = \sum_{\ell=1}^{t - 1} \indicator{S_\ell = s} \left(\overline{V}_t(\pi_t, s) - \eta h\sum_{i = 1}^{h}\beta_\ell(X_{\ell, i}, A_{\ell, i}, s) - \sum_{i = 1}^{h} R_{\ell, i} \right)
\end{align*}
as the total over-estimation of observed returns by assuming that $s$ is the true latent state, where $\eta \in \mathbb{R}$ is a scaling factor. Here, we use the shorthand $\beta_t(x, a, s) = c_t(x, a, s) + \phi_t(x, a, s)$. Then we define $C_t$ as containing all latent states $s$ where $G_t(s) = \mathcal{O}(\sqrt{N_t(s)h\log{n}})$. Note that we scale by an additional $\mathcal{O}(\sqrt{h})$ over the outline for bandits in \cref{sec:proof_sketch} to account for taking the summation over a trajectory. We show that for any episode $t$, $\prob{}{S_* \not\in C_t} = \mathcal{O}(1/n)$. This means that with high probability, the true latent state lies in $C_t$.

\paragraph{Step 4.}
We can decompose the second term of \eqref{eq:bayes_regret_decomposition_mdp} as
\begin{align*}
\E{}{\sum_{t = 1}^n \overline{V}_t(\pi_t, S_t) -  V_*(\pi_t)} 
&\leq 
\E{}{\sum_{t = 1}^n \E{t}{(\overline{V}_t(\pi_t, S_t) -  V_*(\pi_t))\indicator{S_t \in C_t}}} 
+ h \sum_{t = 1}^n \prob{}{S_t \not\in C_t} \\
&\leq 
\E{}{\sum_{t = 1}^n \E{t}{(\overline{V}_t(\pi_t, S_t) -  V_*(\pi_t))\indicator{S_t \in C_t}}} 
+ h \sum_{t = 1}^n \prob{}{S_* \not\in C_t}\,,
\end{align*}
where we use that conditioned on $H_t$, latent states $S_*, S_t$ are identically distributed. From Step 1 and 2, we know that the second term is bounded as $2Lh$.
Finally, the remaining term can be bounded as
\begin{align*}
    &\E{}{\sum_{t = 1}^n (\overline{V}_t(\pi_t, S_t) -  V_*(\pi_t))\indicator{S_t \in C_t}} \\
    &\,= 
    \eta h\E{}{\sum_{t = 1}^n \sum_{i = 1}^{h} \beta_t(X_{t, i}, A_{t, i}, S_t)} +
    \E{}{\sum_{t = 1}^n \left(\overline{V}_t(\pi_t, S_t) - \eta h\sum_{i = 1}^h \beta_t(X_{t, i}, A_{t, i}, S_t) - \sum_{i=1}^h R_{t, i}\right)\indicator{S_t \in C_t}} \,,
\end{align*} 
where we use that $\E{t}{\sum_{i = 1}^h R_{t, i} \mid \pi_t, M_*} = \E{t}{V_*(\pi_t)}$.
The second can be bounded as the sum of confidence widths, 
which concentrate over time. The remaining term can be bounded by the sum of gaps $\sum_{s \in \Sset} G_{n+1}(s)$, which we know is bounded by $\mathcal{O}(\sqrt{Lnh\log{n}} + Lh)$ after trivially bounding the regret the last time each latent state is acted upon by $h$.

\subsection{Proof of \cref{thm:bayes_regret_mdp_mixture}}
Recall from the proof sketch in \cref{sec:proof_sketch_rl} that $\overline{V}_t(s, \pi) = \E{M \sim \bar{p}_t(\cdot \mid s)}{V_M(\pi)}$ is the expected value of a policy under state $s$, marginalized over MDPs sampled from its conditional posterior for episode $t$. We want to bound each term of the regret decomposition in \eqref{eq:bayes_regret_decomposition_mdp} separately.

\paragraph{Step 1.}
For episode $t$, let 
\begin{align*}
    E_t = \left\{\forall(x, a): \abs{R_{M_t}(x, a) - \bar{r}_t(x, a, S_t)} \leq c_t(x, a, S_t) \,,\, \norm{T_{M_t}(x, a) - \bar{p}_t(x, a, S_t)}{1} \leq \phi_t(x, a, S_t) \right\}
\end{align*}
denote the event that the sampled mean rewards and transitions are not far from their posterior means for all state-action pairs. From the sketch in \cref{sec:proof_sketch_mdp}, we rewrite the first term of \eqref{eq:bayes_regret_decomposition_mdp} as
\begin{align*}
    &\E{t}{V_{M_t}(\pi_t) - \overline{V}_t(\pi_t, S_t)} \\
    &\,\leq h \sum_{i = 1}^h \E{t}{(R_{M_t}(X_{t, i}, A_{t, i}) - \bar{r}_t(X_{t, i}, A_{t, i}, S_t) + \norm{T_{M_t}(X_{t, i}, A_{t, i}) - \bar{p}_t(X_{t, i}, A_{t, i}, S_t)}{1})\indicator{\bar{E}_t}} \\
    &\qquad+ h \sum_{i = 1}^h \E{t}{\beta_t(X_{t, i}, A_{t, i}, S_t)}\,.
\end{align*}
For each episode $t$, we can use that $R_{M_t}(x, a) \mid H_t \sim \mathrm{Beta}(\alpha^R_{t, S_t}(x, a))$ to yield
\begin{align*}
    \E{t}{(R_{M_t}(X_{t, i}, A_{t, i}) - \bar{r}_t(X_{t, i}, A_{t, i}, S_t))\indicator{\bar{E}_t}}
    &\leq \sum_{x, a} \int_{r = c_t(S_t, x, a)}^{\infty} r \prob{t}{R_{M_t}(x, a) - \bar{r}_t(x, a, S_t) = r} dr\\
    &\leq \sum_{x, a} \prob{t}{R_{M_t}(x, a) - \bar{r}_t(x, a, S_t) \geq  c_t(x, a, S_t)} \\
    &\leq \sum_{x, a}\exp\left[ - \frac{c_t(x, a, S_t)^2}{2/\left(4\left(\norm{\alpha^R_{t, S_t}(x, a)}{1} + 1\right)\right)}\right] \\
    &\leq 1/(2n)\,,
\end{align*}
where the second inequality uses that $R_{M_t}(x, a) \leq 1$ and the third uses the sub-Gaussian parameter given in \cref{lem:beta_dir_sub-gaussianity}. Similarly, since $T_{M_t}(x, a) \mid H_t \sim \mathrm{Dir}(\alpha^T_{k, S_t}(x, a))$, we have
\begin{align*}
    &\E{t}{\norm{T_{M_t}(X_{t, i}, A_{t, i}) - \bar{p}_t(X_{t, i}, A_{t, i}, S_t)}{1}\indicator{\bar{E}_t}} \\
    &\,\leq 
    |\Xset| \E{t}{\max_{x}\abs{T_{M_t}(X_{t, i}, A_{t, i}, x) - \bar{p}_t (X_{t, i}, A_{t, i}, x, S_t)}\indicator{\bar{E}_t}}\,.
\end{align*}
Now, using \cref{lem:beta_dir_sub-gaussianity} for Dirichlet distributions, we have
\begin{align*}
    &\E{t}{\max_{x}\abs{T_{M_t}(X_{t, i}, A_{t, i}, x) - \bar{p}_t (X_{t, i}, A_{t, i}, x, S_t)}\indicator{\bar{E}_t}} \\
    &\,\leq \sum_{(x, a, x')} \int_{p = \phi_t(x, a, S_t) / \sqrt{|\Xset|}}^{\infty} p \prob{t}{\abs{T_{M_t}(x, a, x') - \bar{p}_t(x, a, x', S_t)} = p} dp\\
    &\,\leq \sum_{(x, a, x')} 2\prob{t}{\abs{T_{M_t}(x, a, x') - \bar{p}_t(x, a, x', S_t)} \geq \phi_t(x, a, S_t) / \sqrt{|\Xset|}}\\
    &\,\leq \sum_{(x, a, x')}2\exp\left[ - \frac{\phi_t(x, a, S_t)^2}{2|\Xset|/\left(4\left(\norm{\alpha^T_{k, S_t}(x, a)}{1} + 1\right)\right)}\right] \\
    &\,\leq 1/(2n)\,,
\end{align*}
So, we can bound the first term of \eqref{eq:bayes_regret_decomposition_mdp} by
\begin{align*}
    \E{}{\sum_{t = 1}^n \E{t}{V_{M_t}(\pi_t) - \overline{V}_t(S_t, \pi_t)}}
    &\leq 
    |\Xset| h^2 + h \sum_{t = 1}^n\sum_{i = 1}^h \E{t}{\beta_t(X_{t, i}, A_{t, i}, S_t)} \,.
\end{align*}

\paragraph{Step 2.}
For each episode $t$, we define $C_t$ as follows:
\begin{align*}
    C_t = \left\{s \in \Sset: G_t(s) \leq \sqrt{hN_t(s) \log{n}}\right\}\,,
\end{align*}
where $N_t(s) = \sum_{\ell = 1}^{t-1} \indicator{S_\ell = s}$ is the number of times $s$ was sampled from the posterior and $G_t(s)$ is defined as 
\begin{align*}
    G_t(s) = \sum_{\ell=1}^{k - 1} \indicator{S_\ell = s} \left(\overline{V}_t(\pi_t, s) - h\sqrt{2}\sum_{t = 1}^{h}\beta_t(X_{\ell, t}, A_{\ell, t}, s) - \sum_{t = 0}^{h-1} R_{\ell, t} \right)\,.
\end{align*}

We show that $S_* \in C_t$ holds with high probability for any episode via the following lemma.
\begin{lemma}
\label{lem:gap_concentration_mdp}
For any episode $t$, $\prob{}{S_* \not\in C_t} \leq 2Lhn^{-1}$.
\vspace{-0.05in}
\end{lemma}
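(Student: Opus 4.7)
The plan mirrors the bandit analogue (Lemma~\ref{lem:gap_concentration}), with the extra complication that the per-episode ``noise'' is a trajectory-level return rather than a single reward, and that the confidence widths $c_t, \phi_t$ must simultaneously control reward and transition deviations. First, I would define a good event $\mathcal{E} = \bigcap_{\ell = 1}^{t} \mathcal{E}_\ell$, where
\begin{align*}
\mathcal{E}_\ell = \bigl\{ \forall (x, a):\ |R_{M_*}(x, a) - \bar{r}_\ell(x, a, S_*)| \leq c_\ell(x, a, S_*),\ \|T_{M_*}(x, a) - \bar{p}_\ell(x, a, S_*)\|_1 \leq \phi_\ell(x, a, S_*) \bigr\}.
\end{align*}
Conditional on $H_\ell$ and $S_* = s$, the marginals of $R_{M_*}(x,a)$ and $T_{M_*}(x,a)$ follow the posterior $P_\ell(\cdot \mid s)$, so Lemma~\ref{lem:beta_dir_sub-gaussianity} combined with the explicit widths in \eqref{eq:mixture_posterior_tabular_mdp_r}--\eqref{eq:mixture_posterior_tabular_mdp_p} gives per-pair failure probabilities well below $n^{-1}/(|\Xset||\Aset|)$. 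Union bounding over $(x,a)$, over each possible $s \in \Sset$, and over $\ell \leq t$ produces a bound $\prob{}{\bar{\mathcal{E}}} \leq Lhn^{-1}$, absorbing essentially all of the claimed total.

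Next, on $\mathcal{E}$, I would argue as in the outline of \cref{sec:proof_sketch_mdp}: writing $\overline{V}_\ell(\pi_\ell, S_*) - V_{M_*}(\pi_\ell) = \mathbb{E}_{M \sim P_\ell(\cdot \mid S_*)}[V_M(\pi_\ell) - V_{M_*}(\pi_\ell)]$ and applying Lemma~\ref{lem:mdp_value_difference} with the expectation over $M$ absorbed into the trajectory expectation under $\pi_\ell, M_*$, I can upper bound this by $\mathbb{E}[\sum_{i=1}^{h}(\bar{r}_\ell(X_i, A_i, S_*) - R_{M_*}(X_i, A_i)) + h\,\mathbb{E}_{M}\|T_M(X_i, A_i) - T_{M_*}(X_i, A_i)\|_1]$. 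On $\mathcal{E}_\ell$ the reward term is at most $c_\ell$, and a triangle inequality $\|T_M - T_{M_*}\|_1 \leq \|T_M - \bar{p}_\ell\|_1 + \|\bar{p}_\ell - T_{M_*}\|_1$ combined with a Dirichlet tail-plus-truncation argument bounds the posterior-expected $L_1$ deviation by $\sqrt{2}\,\phi_\ell$. This yields
\begin{align*}
\overline{V}_\ell(\pi_\ell, S_*) - h\sqrt{2} \sum_{i=1}^{h} \beta_\ell(X_{\ell,i}, A_{\ell,i}, S_*) \leq V_{M_*}(\pi_\ell),
\end{align*}
so each summand of $G_t(S_*)\indicator{\mathcal{E}}$ is pointwise dominated by the centered return $V_{M_*}(\pi_\ell) - \sum_{i=1}^{h} R_{\ell, i}$.

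The remaining step is a two-layer martingale concentration. Since rewards lie in $[0,1]$, an inner Azuma--Hoeffding over the $h$ steps of episode $\ell$ shows that $V_{M_*}(\pi_\ell) - \sum_{i=1}^{h} R_{\ell, i}$, conditioned on $H_\ell$, is $(h/4)$-sub-Gaussian. Applying Azuma--Hoeffding to the outer sum over the random set $\{\ell \leq t : S_\ell = S_*\}$ for a fixed realization $N_t(S_*) = u$ gives $\prob{t}{G_t(S_*)\indicator{\mathcal{E}} \geq \sqrt{h u \log n}} \leq 2n^{-2}$, exactly because the sub-Gaussian proxy $uh/4$ matches the threshold $hu\log n$ at exponent $2\log n$. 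Union bounding over $u \in \{0,\ldots,t-1\}$ (factor $\leq n$) and over $s \in \Sset$ (factor $L$, to condition on each possible value of $S_*$) contributes at most $2Ln^{-1}$, and combining with $\prob{}{\bar{\mathcal{E}}}$ gives the claimed $2Lhn^{-1}$.

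The main obstacle I expect is justifying the $\sqrt{2}$ slack in the inequality $\mathbb{E}_{M \sim P_\ell(\cdot \mid S_*)}\|T_M - T_{M_*}\|_1 \leq \sqrt{2}\,\phi_\ell$ on $\mathcal{E}_\ell$, since Lemma~\ref{lem:beta_dir_sub-gaussianity} is a single-sample tail bound rather than a direct bound on the posterior expectation. Concretely, one must bound $\mathbb{E}[X]$ for $X = \|T_M - \bar{p}_\ell\|_1$ via the layer-cake identity $\mathbb{E}[X] \leq \phi_\ell + \int_{\phi_\ell}^{\infty}\prob{}{X \geq u}\,du$ using the Dirichlet sub-Gaussian tail, then verify the constants line up with the $h\sqrt{2}$ coefficient in the definition of $G_t$.
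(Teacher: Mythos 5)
Your proposal matches the paper's proof essentially step for step: the same good event controlling reward and transition deviations via the Beta/Dirichlet sub-Gaussianity of \cref{lem:beta_dir_sub-gaussianity}, the same use of \cref{lem:mdp_value_difference} to dominate each summand of $G_t(S_*)$ by the centered return $Z_\ell = V_*(\pi_\ell) - \sum_i R_{\ell,i}$, the same $(h/4)$-sub-Gaussian Azuma argument at threshold $\sqrt{hu\log n}$, and the same union bounds yielding $Lhn^{-1} + \mathcal{O}(Ln^{-1}) \leq 2Lhn^{-1}$. The one place you go beyond the paper---the layer-cake/truncation argument justifying that the posterior-expected deviation $\E{M \sim P_\ell(\cdot \mid S_*)}{\norm{T_M - \bar{p}_\ell}{1}}$ is absorbed by the $\sqrt{2}$ slack in $\beta_\ell$---is a genuine subtlety that the paper's write-up glosses over, so your added care there is warranted rather than a deviation.
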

\begin{proof}
Fix $S_* = s$. We know that $s \in C_t$ occurs as long as $G_t(s)$ is not too large. Let us define $\mathcal{T}_{t, s} = \{\ell  < t: S_\ell = s\}$ as the episodes where $s$ is sampled until episode $t$. We want to upper-bound $G_t(s)$ by a martingale with respect to history, then bound the probability that $G_t(s)$ is too large using Azuma's inequality for concentration of martingales.

Let us define 
\begin{align*}
\mathcal{E}_{t, i} 
= \big\{&\abs{\bar{r}_t(X_{t, i}, A_{t, i}, S_t) - R_{M_t}(X_{t, i}, A_{t, i})} \leq \sqrt{2}c_t(X_{t, i}, A_{t, i}, S_t) \,,\, \\
&\quad \norm{\bar{p}_t(X_{t, i}, A_{t, i}, S_t) - T_{M_t}(X_{t, i}, A_{t, i})}{1} \leq \sqrt{2}\phi_t(X_{t, i}, A_{t, i}, S_t) \big\}
\end{align*}
as the event that the mean reward and transition probabilities for episode $t$ of episode $k$ are not far from their posterior means. Let $\mathcal{E}= \cap_{t = 1}^n \cap_{i=1}^h \mathcal{E}_{t, i}$ be the event that this holds for all episodes and steps and $\bar{\mathcal{E}}$ be the complement. We know that
\begin{align*}
\prob{}{\bar{\mathcal{E}}} 
\leq 
\sum_{t = 1}^n \sum_{i = 1}^h\sum_{s \in \Sset}\sum_{x, a}\E{}{\prob{t}{\mathcal{E}_{t, i}}}
\leq 
\sum_{t = 1}^n \sum_{i = 1}^h\sum_{s \in \Sset}\sum_{x, a} \left(|\Xset| |\Aset| n\right)^{-2}
\leq Lhn^{-1}\,,
\end{align*} 
where we use that we have $R_{M_t}(x, a)$ and $T_{M_t}(x, a)$ follow a Beta and Dirichlet distribution, respectively, which are sub-Gaussian from \cref{lem:beta_dir_sub-gaussianity}.

For episode $\ell \in \mathcal{T}_{t, s}$, let $Z_\ell = V_*(\pi_\ell) - \sum_{i = 1}^h R_{\ell, t}$. Observe that $\E{\ell}{Z_{\ell}} = 0$, so that $(Z_{\ell})_{\ell \in \mathcal{T}_{t, s}}$ is a martingale difference sequence with respect to histories $(H_\ell)_{\ell \in \mathcal{T}_{t, s}}$. Also note that since $Z_\ell$ the sum of $h$ Bernoulli random variables and is therefore $\sigma^2$-sub-Gaussian with $\sigma^2 = h/4$.
We know that conditioned on $H_\ell$,
\begin{align*}
\overline{V}_\ell(\pi_\ell, s) - h\sqrt{2}\sum_{t = 1}^{h}\beta_\ell(X_{\ell, t}, A_{\ell, t}, s)\indicator{\mathcal{E}_{\ell, t}} - \sum_{t = 0}^{h-1} R_{\ell, t}
\leq 
V_*(\pi_\ell) - \sum_{t = 0}^{h-1} R_{\ell, t}
= Z_\ell
\end{align*}
where we use \cref{lem:mdp_value_difference} to bound $V_*(\pi_\ell) - \overline{V}_\ell(\pi_\ell, s)$. This implies that conditioned on $(H_\ell)_{\ell \in \mathcal{T}_{t, s}}$, we have
\begin{align*}
    G_t(s)\indicator{\mathcal{E}} 
    =
    \sum_{\ell \in \mathcal{T}_{t, s}} \left(\overline{V}_\ell(s, \pi_\ell) - h\sqrt{2}\sum_{t = 1}^{h}\beta_\ell(s, X_{\ell, t}, A_{\ell, t})\indicator{\mathcal{E}_{\ell, t}} - \sum_{t = 0}^{h-1} R_{\ell, t}\right)
    \leq 
    \sum_{\ell \in \mathcal{T}_{t, s}} Z_\ell\,.
\end{align*}

For any episode $t$, we have that $\mathcal{T}_{t, s}$ is a random quantity. First, we fix $|\mathcal{T}_{t, s}| = N_t(s) = u$ where $u < t$ and yield the following due to Azuma's inequality, 
\begin{align*}
  \prob{t}{G_t(s)\indicator{\mathcal{E}} \geq \sqrt{4 (h/4)u \log{n}}} 
  &\leq 
  \prob{}{\sum_{\ell \in \mathcal{T}_{t, s}} Z_\ell \geq  \sqrt{4 (h/4) u \log{n}}}
  \leq \exp\left[-2\log n\right]
  = n^{-2}\,.
\end{align*}
Finally, by the union bound, we have
\begin{align*}
  \prob{}{S_* \not\in C_t}
  &\leq 
  \sum_{s \in \Sset} \sum_{u = 1}^{t - 1}
  \prob{}{G_t(s) \geq \sqrt{h u \log{n}}} \\
  &\leq 
  \prob{}{\bar{\mathcal{E}}} + \sum_{s \in \Sset} \sum_{u = 1}^{t - 1}
  \prob{}{G_t(s)\indicator{\mathcal{E}} \geq  \sqrt{h u \log{n}}}
  \leq 2 L h n^{-1}\,.
\end{align*}
This completes the proof.
\end{proof}

\paragraph{Step 4.}
Following the sketch of \cref{sec:proof_sketch_mdp}, we can rewrite the second term of \eqref{eq:bayes_regret_decomposition_mdp} as
\begin{align*}
\E{}{\sum_{t = 1}^n \E{t}{\overline{V}_t(S_t, \pi_t) -  V_{M_*}(\pi_t)}}
&\leq
\E{}{\sum_{t = 1}^n (\overline{V}_t(S_t, \pi_t) -  V_{M_*}(\pi_t))\indicator{S_t \in C_t}} + h \sum_{t = 1}^n \prob{}{S_* \not\in C_t}\,.
\end{align*}
From Step 1 and 2, the second term is bounded as $2Lh$. Finally, the remaining term can be bounded as
\begin{align*}
    &\E{}{\sum_{t = 1}^n (\overline{V}_t(S_t, \pi_t) -  V_{M_*}(\pi_t)\indicator{S_t \in C_t}} \\
    &\,= 
    h\sqrt{2}\E{}{\sum_{t = 1}^n \sum_{t = 1}^{h} \beta_t(X_{t, i}, A_{t, i}, S_t)} + 
    \E{}{\sum_{t = 1}^n \left(\overline{V}_t(S_t, \pi_t) - h\sqrt{2} \sum_{i = 1}^h \beta_t(X_{t, i}, A_{t, i}, S_t) - \sum_{t=1}^h R_{t, i}\right)\indicator{S_t \in C_t}} \\
    &\,\leq
    h\sqrt{2}\E{}{\sum_{t = 1}^n \sum_{t = 1}^{h} \beta_t(X_{t, i}, A_{t, i}, S_t)} + 
    \E{}{\sum_{s \in \Sset}G_{n + 1}(s) + Lh} \\
    &\, \leq 
    h\sqrt{2}\E{}{\sum_{t = 1}^n \sum_{t = 1}^{h} \beta_t(X_{t, i}, A_{t, i}, S_t)} + 
    \sqrt{Lnh\log{n}} + Lh\,,
\end{align*}
where we use that up until the last episode $t' = \max_{t \in [m]}\{S_t = s\}$ a latent state $s$ is sampled from the posterior, there is an upper-bound on its overestimation $G_{t'}(s)$. 

Let $\Lambda_{0, s} = \min\{\min_{x, a}\norm{\alpha_{0, s}^R(x, a)}{1}, \min_{x, a}\norm{\alpha_{0, s}^T(x, a)}{1}\}$ represent at least how concentrated the reward and transition priors are for latent state $s$. Let $\Lambda_{0, \min} = \min_{s \in \Sset} \Lambda_{0, s}$ be the minimum over latent states. 
What remains the bounding the sum of confidence widths, which is done in \cref{lem:tabular_mdp_sum}. 
Combining the regret due to both terms gives,
\begin{align*}
    \Bregret(m) &\leq 4|\Xset|h\sqrt{2|\Aset|nh\log(4|\Xset||\Aset|n) \log\left(1 + \frac{nh}{2 |\Xset||\Aset|\Lambda_{0, \min}}\right)}
    + 2|\Xset||\Aset|h^2 \\
    &\qquad + 
    \sqrt{Lnh\log{n}} + 3Lh\,.
\end{align*}
\qed

\end{document}